\documentclass{article} 
\usepackage{iclr2026_conference,times}

\usepackage{amsmath,amsfonts,bm}

\def\eqref#1{equation~\ref{#1}}

\def\1{\bm{1}}

\DeclareMathAlphabet{\mathsfit}{\encodingdefault}{\sfdefault}{m}{sl}
\SetMathAlphabet{\mathsfit}{bold}{\encodingdefault}{\sfdefault}{bx}{n}

\newcommand{\KL}{D_{\mathrm{KL}}}

\usepackage{hyperref}
\usepackage{url}
\usepackage{booktabs}       
\usepackage{amsfonts}       
\usepackage{nicefrac}       
\usepackage{microtype}      
\usepackage{xcolor}         

\usepackage{mathtools}
\usepackage{algorithm}
\usepackage{algorithmicx}
\usepackage{algpseudocode}
\usepackage{multirow}
\usepackage{comment}
\usepackage{booktabs}
\usepackage{makecell}
\usepackage{tablefootnote}
\usepackage{nicematrix, enumitem}
\usepackage{subcaption}
\usepackage{graphicx}

\usepackage{amsmath,amssymb,amsthm,mathtools}

\newtheorem{theorem}{Theorem}

\newtheorem{lemma}{Lemma}
\theoremstyle{remark}

\title{Action-Free Offline-to-Online RL via Discretised State Policies}

\author{Natinael Solomon Neggatu\thanks{University of Warwick}, \; Jeremie Houssineau\thanks{Nanyang Technological University}, \; \& \hspace{.1mm} Giovanni Montana\footnotemark[1] \hspace{0.01mm} \thanks{The Alan Turing Institute} \\
\texttt{\{nati.neggatu, g.montana\}@warwick.ac.uk},\\\texttt{jeremie.houssineau@ntu.edu.sg}
}

\newcommand{\algo}{\text{OSO-DecQN}}

\iclrfinalcopy 
\begin{document}

\maketitle

\begin{abstract}

Most existing offline RL methods presume the availability of action labels within the dataset, but in many practical scenarios, actions may be missing due to privacy, storage, or sensor limitations. We formalise the setting of action-free offline-to-online RL, where agents must learn from datasets consisting solely of $(s,r,s')$ tuples and later leverage this knowledge during online interaction. To address this challenge, we propose learning state policies that recommend desirable next-state transitions rather than actions. Our contributions are twofold. First, we introduce a simple yet novel state discretisation transformation and propose Offline State-Only DecQN (\algo), a value-based algorithm designed to pre-train state policies from action-free data. \algo{} integrates the transformation to scale efficiently to high-dimensional problems while avoiding instability and overfitting associated with continuous state prediction. Second, we propose a novel mechanism for guided online learning that leverages these pre-trained state policies to accelerate the learning of online agents. Together, these components establish a scalable and practical framework for leveraging action-free datasets to accelerate online RL. Empirical results across diverse benchmarks demonstrate that our approach improves convergence speed and asymptotic performance, while analyses reveal that discretisation and regularisation are critical to its effectiveness.

\end{abstract}

\section{Introduction}

Offline reinforcement learning (RL) focuses on learning effective policies entirely from a fixed dataset gathered by an unknown behaviour policy, avoiding the need for risky or costly online exploration. While most offline RL research assumes that all necessary training information is readily available in the dataset, in this work, we explore the scenario where action data is completely absent. Such missing or corrupted actions can occur due to privacy requirements, sensor failures, or simply because actions were never logged. For example, in healthcare settings \citep{kushida2012deid,yu2021reinforcement}, explicit treatment decisions may be excluded from patient records for legal or confidentiality reasons, leaving only patient states (symptoms, vitals, lab results). In finance \citep{liu2022transactional,loukides2012utility}, the exact trades or order flows might be withheld to protect proprietary strategies, so only partial portfolio states are available. In robotics or industrial control, sensor logs are sometimes kept without recording the precise torque or command signals, either to reduce storage overhead or because that information is considered proprietary. In these scenarios, agents must learn from action-free datasets that contain only $(s,r,s')$ tuples. This raises a fundamental question: \textit{can we learn effective policies and transfer them to online RL without ever observing offline actions?} We call this setting action-free offline-to-online RL and propose a scalable framework to address it by learning state policies over next-state transitions rather than actions.

Recent methods address parts of this problem but leave fundamental limitations: Earlier Q-value formulations over target states \citep{edwards2020estimating,hepburn2024state} require actions during training, preventing their application in action-free settings. Transformer-based state policies \citep{zhu2023guiding} attempt to remove this dependency, but they incur high computational costs and have not demonstrated consistent improvements when guiding online RL across diverse environments and dataset qualities. Moreover, existing discretisation strategies \citep{seyde2022solving} have been designed for actions with bounded ranges, making them unsuitable for action-free datasets. None of these works has demonstrated a practical scalable approach that can (a) learn from action-free offline data, (b) scale to high-dimensional control problems and (c) be used for guided online learning to accelerate online performance across a wide range of environments.

In this work, we structure our contribution into two complementary stages. First, we present a simple yet effective strategy for pre-training offline state policies; then we propose a pipeline that leverages these policies for guided online learning that addresses all three criteria. 

For our pre-training strategy, we introduce a novel state discretisation transformation and offline RL algorithm, Offline State-Only DecQN (\algo{}), designed to learn state policies that predict discretised state differences from action-free datasets. The algorithm integrates our discretisation mechanism with conservative regularisation adapted to the discrete state domain. This design avoids the instability of continuous regression, mitigates overestimation bias, and constrains predictions to reachable states. Unlike \citet{seyde2022solving}, our discretisation method is inherently scale-invariant, facilitating learning across state dimensions with varying numerical ranges.

This leads to our second contribution, where we develop a novel mechanism that leverages the pre-trained state policy to accelerate online RL. This mechanism introduces:

\begin{itemize}
    \item An \textbf{online-trained inverse dynamics model (IDM)} to translate predicted state differences ($\Delta s$) into executable actions, trained from scratch and learned jointly with the online agent
    \item A \textbf{policy-switching strategy} that blends offline-guided actions with the online agent's own policy, enabling smooth integration of offline knowledge without destabilising learning.
\end{itemize}

Through extensive experiments across a diverse range of tasks and datasets of varying quality, we show that our approach can effectively guide exploration online, improving the performance of online algorithms beyond existing action-free methods on a wide range of tasks, including environments with up to 78 state dimensions. To our knowledge, no prior work has combined an action-free offline state policy learner with a mechanism to guide online learning to produce consistent gains across diverse control environments.

To better understand why our approach provides valuable signals for enhancing online learning, we undergo a novel offline analysis comparing our method to alternative algorithms that pre-train state policies. Through this comparative analysis, we reveal that: 

\begin{itemize}
    \item The naive approach of training state policies using continuous next-state targets results in state policies with poor prediction capabilities.
    \item Our proposed discretisation mechanism enables us to sidestep the difficulties inherent in continuous regression.
    \item Despite the loss of information via discretisation, \algo{} is capable of suggesting trajectories that outperform the empirical performance of the dataset.
    \item Regularisation is a critical component in \algo{} to prevent overestimation bias and ensure prediction error of state policies is small.
\end{itemize}

\section{Related work}

\paragraph{Offline RL} Offline RL methods focus on learning policies using a fixed dataset generated by an unknown behaviour policy. A large proportion of methods focus on constraining policies to stay close to the support of the data during training with the aim of mitigating the issues of distributional shift. Although this has been achieved in various ways in the literature, most methods fall into two broad categories. Policy-constraint methods \citep{fujimoto2021minimalist, wu2019behavior, fujimoto2019off, kumar2019stabilizing} encompass the set of algorithms that look to explicitly constrain the policy to ensure that the agent stays close to the data. Alternatively, conservative methods \citep{an2021uncertainty,bai2022pessimistic,kumar2020conservative} look at penalising the Q-value estimate of out-of-distribution (OOD) actions. OSO-DecQN extends these concepts to the action-free setting, adapting conservative regularisation techniques to constrain state transitions rather than actions, that additionally addresses the challenge of state reachability that is unique to action-free offline RL.

\paragraph{State-only RL} \citet{edwards2020estimating} develop a novel form of value function, QSS', which estimates the return of transitioning to a target state rather than a selected action. Despite estimating the Q-value of transitioning to next-states, they rely on actions to make next state predictions that can be used to train the critic.  \citet{hepburn2024state} extend this method to the offline setting, positing that it can be more effective to develop policies from offline data by estimating the critic values of target states instead of actions. However, like \citet{edwards2020estimating}, they still rely on using actions to train their networks. In contrast, \citet{zhu2023guiding} removes the need for action labels entirely and instead uses decision transformers to learn a state policy from action-free datasets offline. They build off the work of \citet{chen2021decision} and use entire trajectories instead of a single state to estimate the next state in the sequence. Decision transformers have also been proposed in an offline setting when actions are available \citep{bhargavashould}, but several offline algorithms \citep{fujimoto2021minimalist,an2021uncertainty,beeson2024balancing} outperform them while also being less computationally expensive. Other action-free approaches adapt transformers \citep{zhou2023learning, zhou2024nolo, luo2024pre} for visual data and unsupervised pretraining \citep{seo2022reinforcement}. However, \citet{zhou2023learning,luo2023action, seo2022reinforcement}, assume that the visual demonstrations are of expert or near-expert quality, which enables purely imitation-learning objectives. In contrast, in our offline RL setting, the data generating process is of unknown (and potentially mixed) quality. \citet{zhu2023guiding} can be viewed as an adaptation of \citet{zhou2023learning} to the offline RL setting, using a similar implicit reward mechanism to guide online agents. Furthermore, our work differs by employing a simple discretisation strategy that circumvents the need for computationally intensive transformer-based architectures, while providing more stable performance through effective regularisation that addresses the critical reachability constraints overlooked in previous work.

\paragraph{Single agent action value decomposition} \citet{seyde2022solving} investigate solving continuous state-action problems using algorithms designed for discrete action settings. They develop a novel algorithm, Decoupled Q-Networks (DecQN), that combines deep Q-learning \citep{mnih2015human} with value decomposition \citep{sunehag2018value,rashid2020monotonic}, studied in multi-agent RL (MARL), to learn utility values for each action dimension. In their work, they treat each action dimension independently, allowing them to reduce the computational complexity from scaling exponentially with the action dimension to linearly. \citet{irelandrevalued} build on DecQN by showing that the use of ensembles can reduce variance, improving sample efficiency. \citet{beesoninvestigation} explore the use of DecQN in the offline setting for continuous action environments. They generate several discrete action datasets across a wide range of continuous-action tasks to benchmark offline variants of DecQN. \citet{tang2022leveraging} also look at using a linearly decomposed critic for offline RL, although they focus solely on discrete action environments. These efforts collectively show that decomposing the critic can be highly effective in both online and offline scenarios, which motivates our extension of decomposed Q-learning to action-free datasets.

\section{Background}

\subsection{Preliminaries}

We model the RL problem as a Markov decision process (MDP) that can be completely specified by the tuple $(\mathcal{S},\mathcal{A},P,r,\rho,\gamma)$, where $\mathcal{S}\subseteq \mathbb{R}^M$ and $\mathcal{A} = \prod_{i=1}^{N}\mathcal{A}_i \subseteq \mathbb{R}^N$ are, respectively, the state and action space. We assume the continuous action space has this Cartesian product structure, with no additional coupling constraints between dimensions. $P(s'|s,a)$ defines the environment dynamics, $\rho(s_0)$ describes the initial state distribution, $r \vcentcolon \mathcal{S} \times \mathcal{A} \to \mathbb{R}$ is the reward function, and $\gamma \in (0,1]$ is the discount factor. The agent's behaviour within the environment is specified by policy $\pi(a|s)$ and the expected return for a given policy can be described using the action value function $Q^\pi(s,a) = \mathbb{E}_\pi[\sum_{t\ge0}\gamma^tr_t|s_0=s,a_0=a]$.

\subsection{Decoupled Q-Networks}

Both \citet{seyde2022solving} and \citet{tang2022leveraging} propose the idea of decomposing the Q function, used in Deep Q-Networks (DQN) \citep{mnih2015human}, into utility functions $U^i(s,a_i)$ for each action dimension. The overall action value is then estimated as the average over the utility functions as
\begin{equation}
    Q_{\theta}(s,{\bf a}) = \frac{1}{N}\sum^N_{j=1} U^j_{\theta_j}(s,a_j)
    \label{eq:decomp Q}
\end{equation}
where $\mathbf{a} = (a_1,\dots,a_N) \in \mathbb{R}^N$ and $U^j$ computes the utility value for the $j^{th}$ action dimension. This approach effectively treats each action dimension independently, reducing what would otherwise be an exponential complexity in discrete action settings to linear in $N$.

We define the argument of the maximum over $\mathbf{a}$ in an element-wise fashion as
\[
\arg\max_{\bf a} Q_{\theta}(s,\mathbf{a} ) := \left(\arg\max_{a_1} U^1_{\theta_1}(s,a_1),\dots,  \arg\max_{a_N} U^N_{\theta_N}(s,a_N)\right)
\]
The Q function is then trained by minimising the following loss function
\begin{equation}
    \mathcal{L}(\theta) = \frac{1}{|B|}\sum_{(s_{0},\mathbf{a}_{0},r_{0:n-1},s_{n})\in B} L(y^n - Q_\theta(s_0,\mathbf{a}_0))
\end{equation}
    where $B$ is the sampled batch and $y^n = \sum^{n-1}_{j=0}\gamma^j r_{j} + \gamma^n Q_{\bar\theta}(s_{n},{\bf a}^\star_{n})$ is the $n$-step Q-learning target, with $\bar{\theta}$ the lagged parameter of the target Q network. The action $\mathbf{a}^\star$ can be retrieved efficiently as $\mathbf{a}^\star = \arg\max_{\bf a} Q_{\bar\theta}(s,\mathbf{a} )$. The most common choices of $L$ have been the Huber and MSE functions. 

\section{Methodology}

\subsection{Discretising the target state space} \label{subsec:discretising state space}

Our goal for pre-training is to learn a state policy that predicts how each state dimension should evolve to yield high returns. Instead of directly regressing to continuous next states, an approach prone to instability and overfitting, we introduce a simple discretisation transformation. For each state dimension, we learn whether it will increase, decrease, or remain unchanged. Formally, we define $\delta^\epsilon (s,s')$ as a vector in $\{-1,0,1\}^M$ with $i$-th component
\begin{equation}
\delta^\epsilon_i (s,s') = 
\begin{cases*}
    -1 & if $s'_i-s_i < -\epsilon$ \\
    1 & if $s'_i-s_i > \epsilon $ \\
    0 & otherwise.
\end{cases*}
\label{alg:discretising state diff}
\end{equation}
This transformation enables us to replace the instability of continuous regression with a discrete prediction problem. In turn, it allows us to leverage the well-studied value-based Q-learning framework to effectively learn from action-free datasets. We henceforth use the shorthand notation $\Delta s$ to denote the discretised state difference.

Unlike the action discretisation approach of \citet{seyde2022solving}, we discretise state differences after first applying z-score normalisation to each state dimension, yielding an effectively scale-invariant transformation. This eliminates dependence on raw numerical ranges and avoids costly per-dimension tuning, leading to a more robust performance across heterogeneous state spaces.

Discretising state differences simplifies prediction but inevitably introduces approximation error. To quantify this effect, we establish the following bound.

\begin{theorem}[Value bound under $k$ evenly spaced bins in $\Delta s$]\label{lem:k-bins-compact}
If the discretised increment space $\Delta s$ is partitioned into $k$ evenly spaced bins
per coordinate, then
\[
\bigl\|V^* - V_D^*\bigr\|_\infty = O\left(\frac{H\sqrt{M}}{k}\right).
\]
where $V^*$ and $V_D^*$ are the optimal value functions of the original and discretised MDPs. Here $M$ denotes the state dimension and $H \coloneqq \delta_s^{\mathrm{max}}-\delta_s^{\mathrm{min}}$ denotes the range of the per-coordinate mean increments. The required assumptions, and the proof are provided in Appendix~\ref{sec:proof}.
\end{theorem}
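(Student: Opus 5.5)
The plan is to read the discretised problem as a deterministic "state-policy MDP". In the original problem the agent picks a target mean increment $\delta\in\mathcal{D}\coloneqq\prod_{i=1}^M[\delta_s^{\min},\delta_s^{\max}]$ and moves to $s'=s+\delta$ (plus noise, the same in both models). In the discretised MDP the choice of $\delta$ is restricted to the grid $\mathcal{D}_k$, whose per-coordinate bin centres lie at spacing $H/k$.

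I will make the following assumptions explicit.
\begin{itemize}
\item The reward $r(s,\delta)$ is $L_r$-Lipschitz in $\delta$ in Euclidean norm.
\item The transition kernel is $L_P$-Lipschitz in $\delta$ in total variation (or $1$-Wasserstein).
\item Rewards are bounded by $R_{\max}$ and $\gamma<1$, so $\|V\|_\infty\le R_{\max}/(1-\gamma)$.
\end{itemize}
These conditions, together with "every realisable increment lies in $\mathcal{D}$", are the hypotheses the statement defers to Appendix~\ref{sec:proof}.

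The first step is a quantisation estimate. For every $\delta\in\mathcal{D}$, let $q(\delta)\in\mathcal{D}_k$ be the nearest bin centre. Coordinatewise $|\delta_i-q(\delta)_i|\le H/(2k)$, so
\[
\|\delta-q(\delta)\|_2\le \frac{H\sqrt{M}}{2k}.
\]
This single inequality is the source of the $\sqrt{M}$ and the $H/k$ in the bound.

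The second step is the standard simulation-lemma comparison of optimal Bellman operators $T$ and $T_D$, where $T_D$ maximises only over $\mathcal{D}_k$. Since $\mathcal{D}_k\subseteq\mathcal{D}$, we have $V_D^*\le V^*$ immediately. For the other direction, fix $s$ and let $\delta^\star$ be (near-)optimal for $V^*$. Comparing $Q^*(s,\delta^\star)$ with $Q^*(s,q(\delta^\star))$ gives
\[
V^*(s)-\max_{\delta\in\mathcal{D}_k}Q^*(s,\delta)\le \Bigl(L_r+\gamma L_P\tfrac{R_{\max}}{1-\gamma}\Bigr)\frac{H\sqrt{M}}{2k}\eqqcolon \varepsilon_k.
\]
Hence $\|TV^*-T_DV^*\|_\infty\le\varepsilon_k$. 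Combining this with $\gamma$-contraction of $T_D$ yields
\[
\|V^*-V_D^*\|_\infty\le\|T_DV^*-T_DV_D^*\|_\infty+\varepsilon_k\le\gamma\|V^*-V_D^*\|_\infty+\varepsilon_k,
\]
so $\|V^*-V_D^*\|_\infty\le\varepsilon_k/(1-\gamma)$. Absorbing the $\gamma$, $L$ and $R_{\max}$ constants into $O(\cdot)$ gives $O(H\sqrt{M}/k)$.

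The main obstacle is justifying the modelling step: that choosing an increment in $\mathcal{D}$ or in $\mathcal{D}_k$ is actually implementable, i.e.\ that each grid increment is reachable (realised by some action via the IDM). I would handle this by assumption, requiring that mean increments are controllable over the box $\mathcal{D}$ and that the kernel depends on the action only through the mean increment. If exact reachability fails, I would add a reachability-error term, but the theorem as stated suggests the authors assume it away. A secondary subtlety is the Lipschitz-in-$V$ term. It needs $V^*$ bounded, which holds by the reward bound; or $V^*$ Lipschitz in $s$ if Wasserstein continuity is used instead of total variation. I would state the TV version for simplicity.
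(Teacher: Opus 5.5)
Your proof is correct, and its skeleton matches the paper's. Both take the bin representative of the optimal choice, bound $T V^* - T_D V^*$ pointwise by $Q^*(s,a^\star)-Q^*(s,\bar a)$, use the cell radius $H\sqrt{M}/(2k)$, and close with the $\gamma$-contraction of $T_D$. The differences are in how the one-step mismatch is controlled, and one of them matters.

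For the transition term, the paper assumes Gaussian increments $\mathcal{N}(\mu_\Delta(s,a),\Sigma_\Delta(s))$ with action-independent covariance. It then bounds the mismatch via the closed-form Gaussian KL plus Pinsker, getting $\epsilon_{\mathrm{KL}}\le \Lambda_\Delta M H^2/(8k^2)$. This is exactly your TV-Lipschitz hypothesis with $L_P=\tfrac12\sqrt{\Lambda_\Delta}$, where $\Lambda_\Delta=\sup_s\lambda_{\max}(\Sigma_\Delta(s)^{-1})$. So your assumption is the more general one, and the Gaussian model is one concrete way to verify it.

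For the reward term, the paper imposes no regularity and simply bounds $r(s,a^\star)-r(s,\bar a)$ by the reward span $\Delta_r$. This leaves an additive $\Delta_r/(1-\gamma)$ in its final inequality, and that term does not decay in $k$. The displayed bound is therefore really $O(1)+O(H\sqrt{M}/k)$, and it is no sharper than the trivial estimate $\|V^*-V_D^*\|_\infty\le\Delta_r/(1-\gamma)$. Without some regularity of the reward in the increment, the stated rate can genuinely fail, e.g.\ if the reward depends on action components that do not move $\mu_\Delta$.

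Your Lipschitz-reward assumption replaces that constant by $L_r H\sqrt{M}/(2k(1-\gamma))$. Your transition contribution then coincides exactly with the paper's second term, provided you use $\mathrm{sp}(V^*)\le\Delta_r/(1-\gamma)$ as the multiplier of the TV distance. Under the sup-over-events convention for TV that is the right constant; $R_{\max}/(1-\gamma)$ is off by a harmless factor of two for signed rewards. So your version is the one that actually delivers $O(H\sqrt{M}/k)$.

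Finally, the reachability issue you flag is present, unstated, in the paper's argument too. It simply chooses $\bar a_C$ with $\mu_\Delta(s,\bar a_C)$ at the bin centre. Making controllability of the mean increment an explicit hypothesis, as you propose, is the right repair.
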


This bound shows that discretising each state dimension into a finite number of bins preserves the value function up to a controllable error that diminishes as the discretisation becomes finer.

\subsection{\algo{}}

Our algorithm, Offline State Only DecQN ($\algo$), adapts the DecQN framework to the action-free setting by replacing discrete actions with discretised state differences. Specifically, we define Q-values of the form $Q(s, \Delta s)$, where $\Delta s \in \{-1,0,1\}^{M}$ represents the discretised change in each state dimension. This reformulation allows us to estimate action-free Q-values directly from $(s,r,s')$ tuples, without requiring explicit action labels. 

While DecQN decomposes Q-values across action dimensions, \algo{} instead decomposes across state-difference dimensions, enabling the same scalability benefits in high-dimensional state spaces. We use an ensemble variant of DecQN for improved sample efficiency, and employ a variant of double Q-learning \citep{van2016deep} to stabilise learning. Importantly, we further introduce a regularisation term $\mathcal{R}_{\theta}$, detailed in Section~\ref{sec:regularisation} that mitigates overestimation bias and addresses state reachability constraints, a challenge unique to pre-training from action-free learning and not addressed in standard DecQN. 

\begin{algorithm}[ht!] 
    \caption{$\algo$}
    \label{alg:decqn_offline}
    \begin{algorithmic}[1]
        \State Load dataset $\mathcal{D}$ with samples of the form $(s,r,s')$
        \For{$i=1\dots N$}
        \State Sample mini batch of transitions $(s,r,s') \sim \mathcal{D}$
        \State Sample $\Delta s'$ based on the softmax of $Q_\theta(s',\cdot)$
        \State Compute target $y^1 = r + \gamma Q_{\bar\theta}(s',\Delta s')$
        \State Update parameters:
        \State $$\theta \leftarrow \arg\min_\theta \sum \big[y^1 - Q_\theta(s,\Delta s) \big]^2 + \alpha\mathcal{R}_{\theta} $$
        \EndFor 
    \end{algorithmic}
\end{algorithm}

\subsection{Regularisation}
\label{sec:regularisation}

Various regularisation techniques have been used in offline RL to constrain action policies from deviating away from the support of the data. Many such methods rely on the actor-critic framework that separates policy and critic learning. Value-based methods such as DecQN, however, implicitly learn the policy by training the critic, and thus cannot utilise such regularisation methods with the same convergence guarantees. \citet{kumar2020conservative} propose a penalty term in the critic loss to conservatively learn critic values in the actor-critic framework, which \citet{luo2023action} show is equivalent to the negative log-likelihood behavioural cloning (BC) loss in the discrete action setting. Denoting $\| \exp(Q_\theta(s,\cdot)) \|_1 = \sum_{\Delta s} \exp(Q_\theta(s,\Delta s))$, we adapt this regularisation term for our discrete state difference, $\mathcal{R}_{\theta} = \sum_{(s,\Delta s)\sim \mathcal{D}}\log\| \exp(Q_\theta(s,\cdot)) \|_1-Q_\theta(s,\Delta s)$, in Algorithm~\ref{alg:decqn_offline}, to mitigate the issues of both overestimation bias and state reachability constraints.

\subsection{Guided online learning}
\label{subsec:guided}

As offline trained state policies can not be directly used to interact with the environment, we instead explore how they can be used to guide online learning. In particular, we identify several key factors that must be taken into account for guided online learning. Namely, we consider how to leverage the offline Q-values that define the state policy to guide training online, which type of online agent to guide using our method, and finally how to train an inverse dynamics model (IDM), from scratch, concurrently with the online agent to translate next state predictions. We address each of these points in the following sections. 

\paragraph{Leveraging Q-values}
\label{subsubsec:leveraging Q values}
To demonstrate how our offline algorithm can enhance online performance, we use its critic to guide exploration within a standard actor-critic algorithm. Specifically, we introduce a policy-switching mechanism that integrates the pre-trained state policy predictions into the online agent’s decision-making, enabling more targeted and efficient exploration.

We use an IDM, denoted $I$, that is trained concurrently from scratch, to convert the state policy into actions and switch between the online policy and offline policy during training as follows:
$$\mathbf{a} = 
        \begin{cases*}
            I_\phi(s,\arg\max\limits_{\Delta s} Q(s,\Delta s)) &$\zeta<\beta$\\
            \pi_{\mathrm{on}}(s) &otherwise,
        \end{cases*} 
$$
where $\zeta\sim U(0,1)$ and $\beta$ is a hyperparameter. The samples are stored in the online agent's replay buffer, where they are used to train both the IDM and the online agent.

 Alternatively, \citet{zhu2023guiding} propose using an intrinsic guiding reward related to the discrepancy between the planned and true next state. However, without convergence guarantees, it is not clear whether this approach can generalise across tasks. Instead, we focus on leveraging Q-values to guide action selection and leave exploring reward shaping methods to future work.

\paragraph{Training an IDM}

We incorporate an IDM to translate next state predictions into actions that can be used to improve the online agent's behaviour. While more complex alternatives may exist for mapping next state predictions to actions, we deliberately adopt a minimal IDM as the simplest approach that integrates with our method. This choice ensures low computational overhead and keeps our focus on the state policy's contribution rather than the translator architecture.

To train the IDM, we use supervised learning. We find that the $L_1$ loss is better at achieving lower per-dimension errors than MSE. This is because the $L_1$ loss approximates the median whereas the $L_2$ loss approximates the mean, making the $L_1$ loss more robust to outliers. In addition to the samples obtained online, we also leverage samples from the offline dataset that we denote $(s_{\mathrm{off}},\Delta s_{\mathrm{off}},r_{\mathrm{off}}, R_{\mathrm{off}})$. Here, $R_{\mathrm{off}}$ represents the observed discounted return. Since we assume that there are no actions in the offline dataset, we first assign an action using the online agent's policy as $\mathbf{a}_{\mathrm{on}, \mathrm{off}} = \pi_{\mathrm{on}}(s_{\mathrm{off}})$. We then determine what the IDM would predict by computing $I_\phi(s,\Delta s_{\mathrm{off}})$. Consequently, the final loss is 
$$L(\phi) = \| { \bf a}_{\mathrm{on}, \mathrm{off}} - I_\phi(s,\Delta s_{\mathrm{off}}) \|_1 + \|{\bf a} -I_\phi(s,\Delta s)\|_1.$$

\paragraph{Mapping state discretisations to actions} Our guided online mechanism assumes that there exists a sufficiently regular inverse relationship between actions and local state transitions in the environment. In particular, for most states $s$ encountered online, the mapping from action to next state is locally smooth and predictable, so that a lightweight inverse dynamics model can learn $I_\phi(s,\Delta s)$ that approximately realises the target discretised transition $\Delta s$. This requires a local continuity in the dynamics (small changes in action induce coherent changes in the resulting state change) and a discretisation that is not so coarse as to collapse distinct next-state transitions induced by different actions into the same $\Delta s$. When these conditions fail, a more expressive IDM may be required to ensure that guidance benefits do not diminish. This can occur under strongly discontinuous dynamics or highly multi-modal inverse mappings where the same $(s,\Delta s)$ corresponds to many distinct actions and their smoothed average does not realise $\Delta s$.

\section{Experimental results}

In this section, we investigate the performance of $\algo$ at guiding online learning across a wide range of continuous and discrete action tasks. We pre-train the discrete state policies on the datasets from the D4RL suite \citep{fu2020d4rl} and the Action-factorised DeepMind Suite \citep{beesoninvestigation}, \textbf{removing all action information beforehand}. These tasks present a set of challenging control problems that allow us to assess the effectiveness and robustness of our algorithm. We provide more details on environments in Appendix \ref{subsec:environment details}.

\subsection{Guided online learning}

In Figure \ref{fig:online learning} we show that our method of guiding online agents using the offline trained state policies is capable of improving both asymptotic performance and convergence speed. This demonstrates the practical value of our action-free approach, as the learned state policies effectively transfer knowledge despite never having access to actions during offline training. We use TD3 \citep{fujimoto2018addressing} as the agent we focus on improving for the continuous action datasets and an ensemble variant of DecQN that we denote DecQN\_N for the discrete action datasets.

\begin{figure}[th!]
    \begin{subfigure}[b]{\textwidth}
        \centering
        \includegraphics[width=0.32\linewidth]{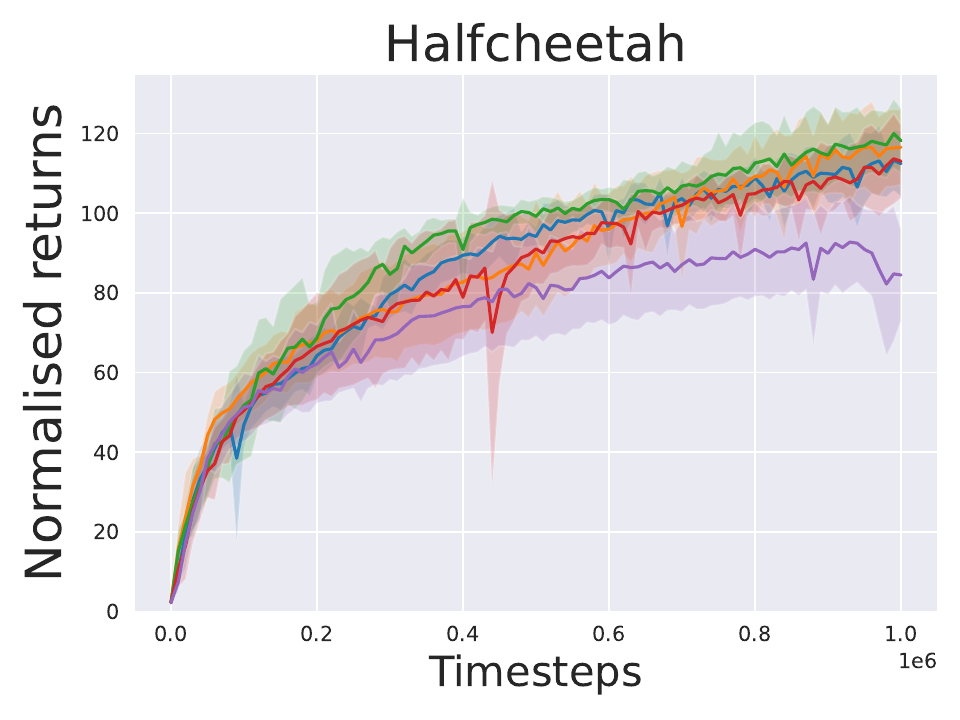}
        \includegraphics[width=0.32\linewidth]{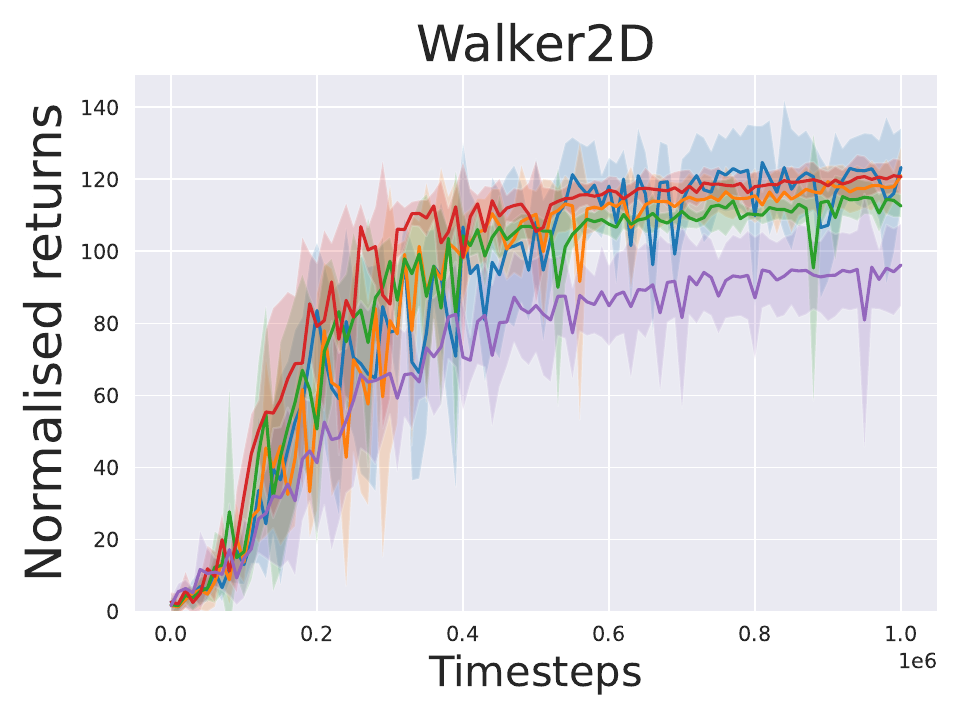}
        \includegraphics[width=0.32\linewidth]{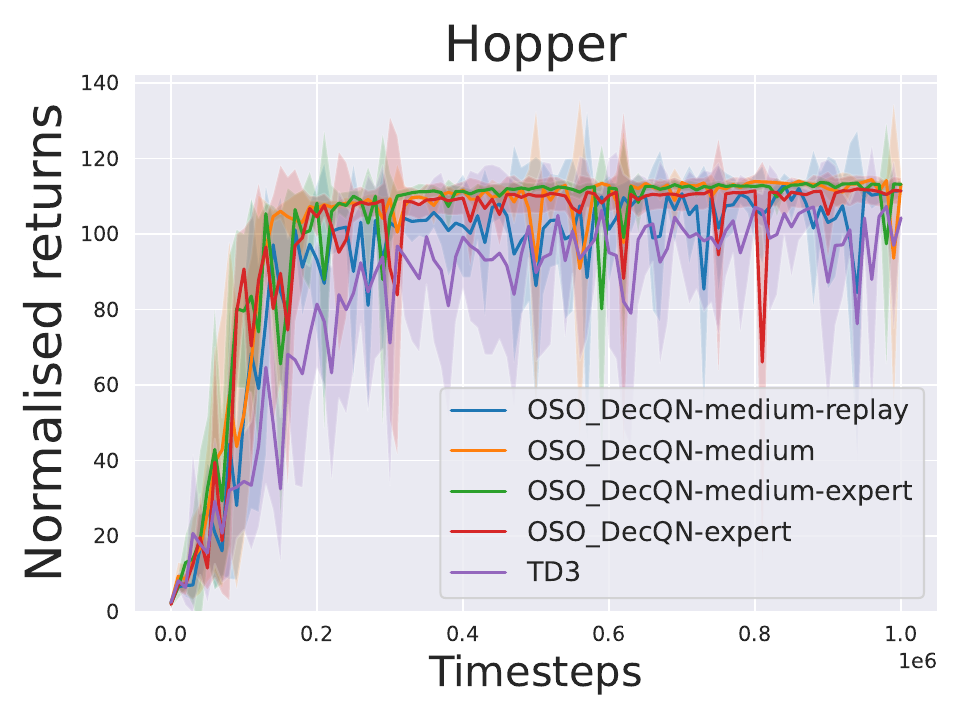}   
        \caption{D4RL with TD3 as base algorithm.}
    \end{subfigure}
    \begin{subfigure}[b]{\textwidth}
        \centering
        \includegraphics[width=0.32\linewidth]{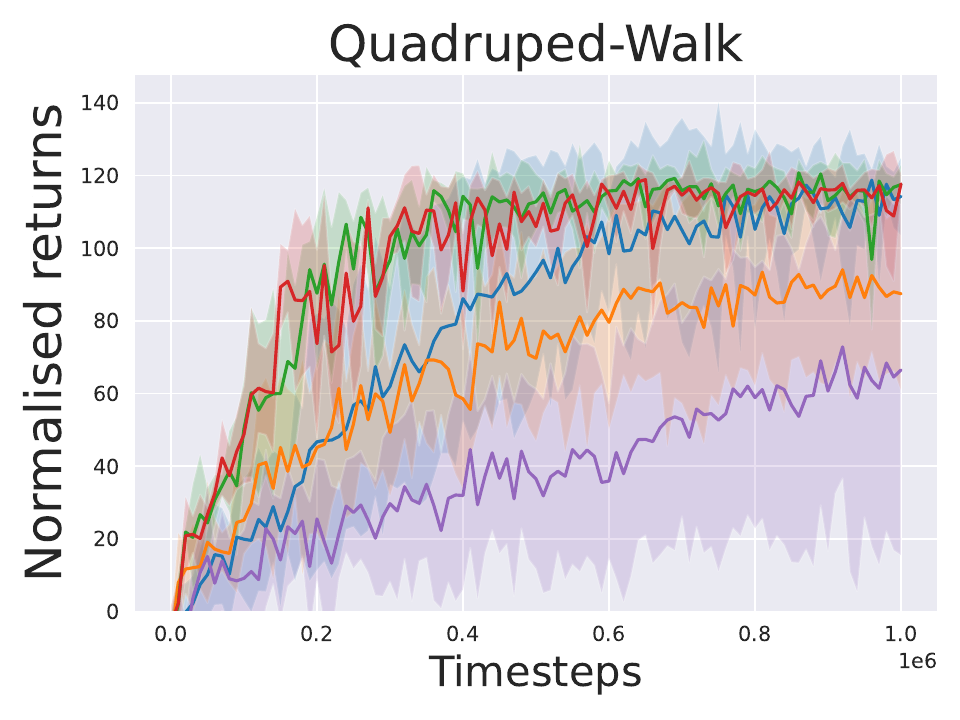}
        \includegraphics[width=0.32\linewidth]{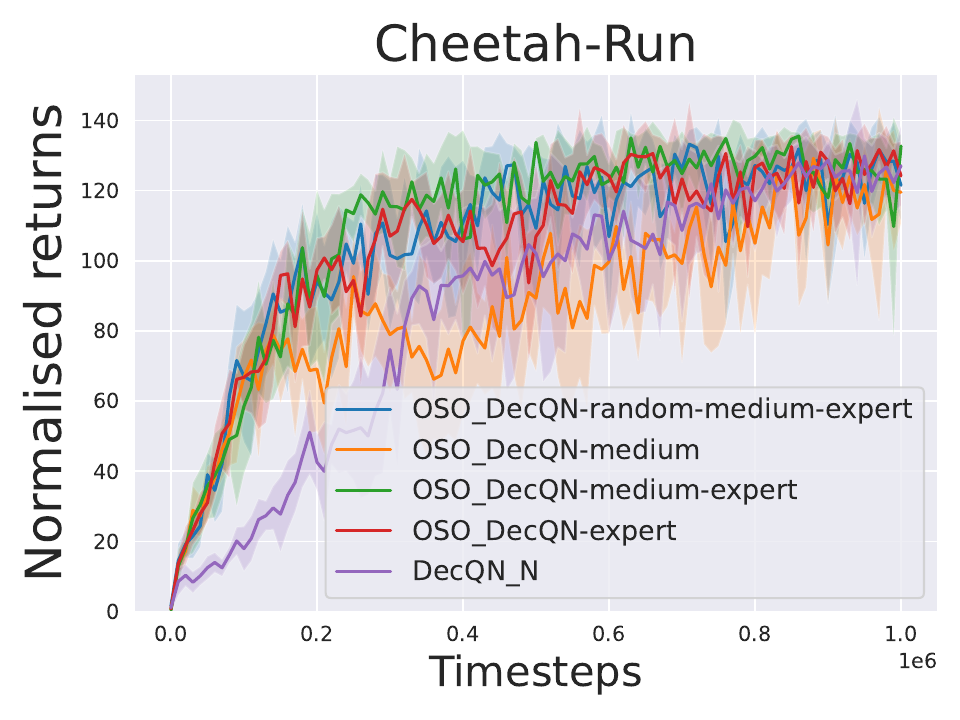}
        \caption{DeepMind Control Suite with DecQN\_N as base algorithm.}
    \end{subfigure}
    \caption{Online learning curves comparing the performance of guided online learning with state policies pre-trained on datasets of different qualities against baselines over 1M timesteps. The solid line corresponds to the mean normalised return across 5 seeds with the shaded area corresponding to 1 standard deviation away from the mean.}
    \label{fig:online learning}
\end{figure}

We observe that our method shows the smallest improvement in the hopper environment, which we believe is a result of being the simplest task with the fewest actions and observation dimensions, meaning the online agent is able to learn quickly without guided exploration. Despite this, we are still able to obtain slightly higher asymptotic performance and improve convergence speeds.

For Quadruped-Walk, the environment with the highest state dimension (78 in total), we achieve a consistent performance improvement during the early stages of learning, demonstrating the scalability of our approach. We observed that the performance of the online agent DecQN\_N without guided learning was highly sensitive to the choice of seed, leading to variability in the results. As a result, the average performance is lower than the results reported in \citet{irelandrevalued}. We found that this was the result of minor changes we made to the hyperparameters and architecture they provided, which led to increased variability in performance. Further details on our hyperparameter choices are provided in Appendix \ref{sec:experimental details}. Although our main experiments focus on enhancing TD3, Appendix \ref{sec:sac guidance} demonstrates that the same method and set of hyperparameters can be applied to SAC with comparable improvements, showing that our approach can integrate seamlessly with other online algorithms and requires minimal tuning.

In conclusion, we observe that in both continuous and discrete action settings, \algo{} is able to pre-train state policies that meaningfully accelerate and improve online learning, demonstrating that valuable knowledge can be extracted from offline datasets even without action labels. 

\paragraph{IDM analysis} To demonstrate the robustness of our method to the design of the IDM, we keep the IDM architecture and learning fixed across all environments in this section. Additionally, in Appendix \ref{subsec:idm ablation} we conduct an ablation study varying the IDM architecture and batch size. The results show that, across both smaller and larger architectures and batch sizes, performance remains similar. This indicates, in our tested settings, that the IDM has minimal effect on online guidance, allowing us to focus primarily on the predictive qualities of the state policy. In addition to these experiments, we also examine the sensitivity of guided learning to $\beta$ (Appendix \ref{subsec:beta ablation}), demonstrating that our improvements persist across a range of hyperparameter values. Finally, we provide an empirical analysis of IDM learning dynamics (Appendix \ref{subsec:idm loss}). Together, these results suggest that, in our tested settings, the IDM can reliably translate state policy predictions across diverse environments while incurring minimal computational overhead. 

\paragraph{Benchmark comparison}
We compare our method with that of \citet{zhu2023guiding} in Figure \ref{fig:benchmark comparison}, which presents an alternative approach to guided online learning using state policies. To keep the plots uncluttered, we only include TD3 trained from scratch (we also provide a combined plot with all results in Figure \ref{fig:afguide_and_osodecqn}) and note that, across all three environments, their method consistently underperforms relative to TD3, in contrast to ours. This further highlights the significance of our method in consistently providing a more effective signal for guiding online agents. Since our approach does not rely on a decision transformer, we can substantially reduce the time to pre-train state policies on action-free datasets. Furthermore, our algorithm scales efficiently with state dimensionality, enabling us to guide agents in environments with up to 78 state dimensions.

\begin{figure}[th!]
    \centering
    \includegraphics[width=0.32\linewidth]{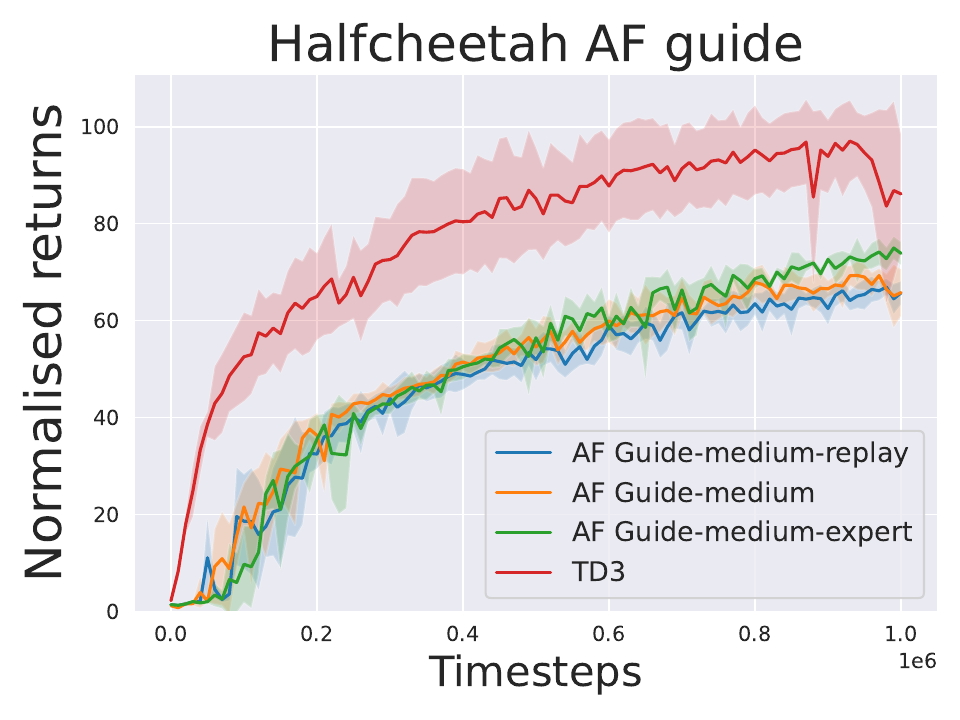}
    \includegraphics[width=0.32\linewidth]{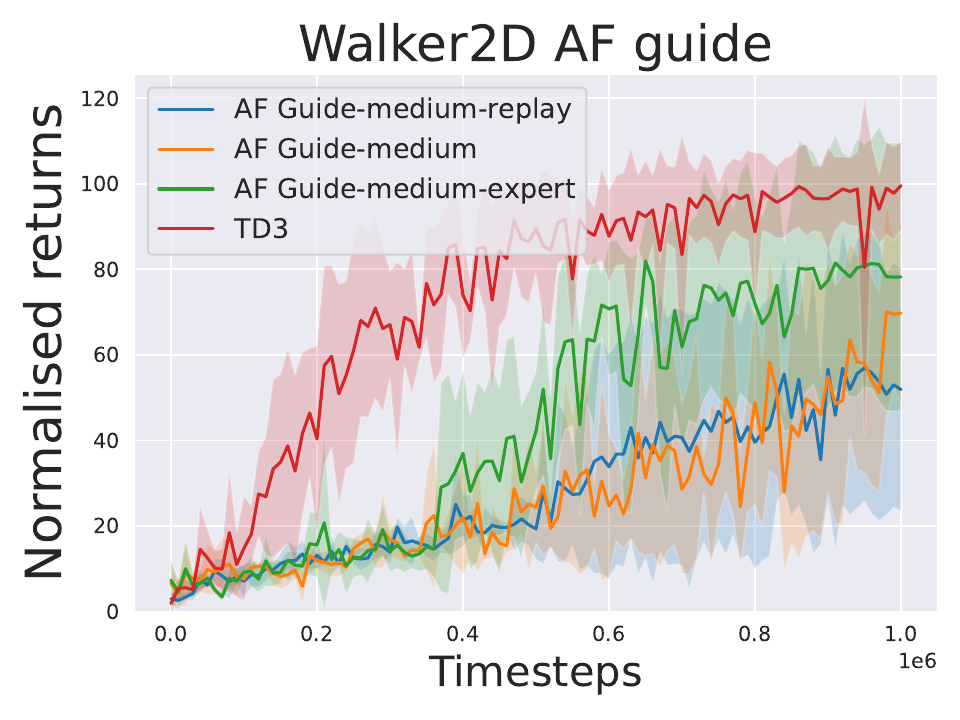}
    \includegraphics[width=0.32\linewidth]{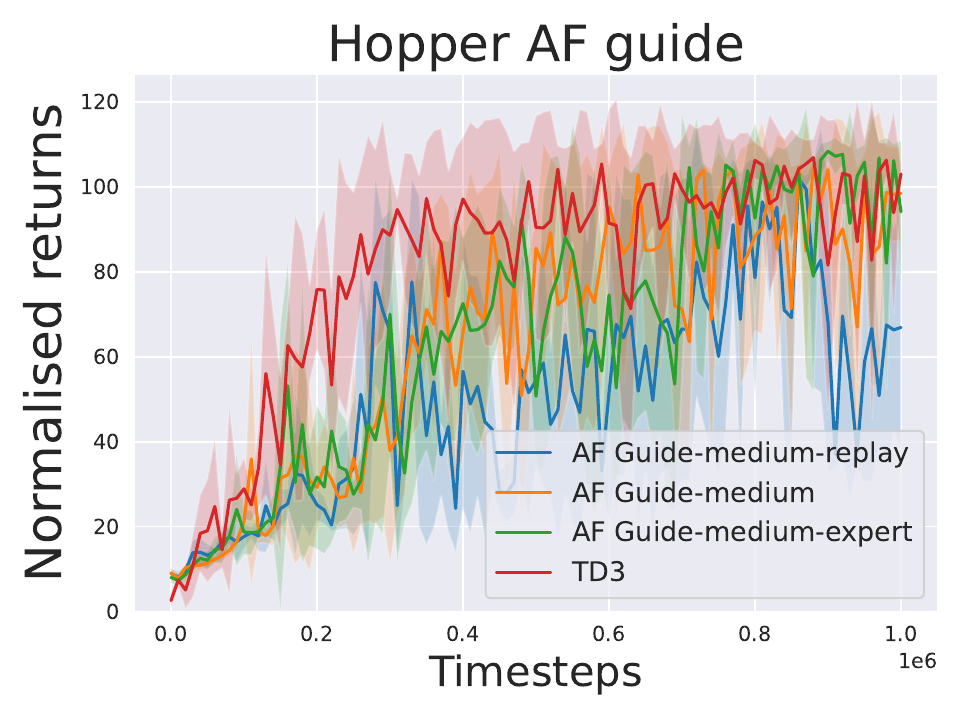}
    \caption{Comparison of Af-guide \citep{zhu2023guiding} against TD3 baseline. The solid line corresponds to the mean normalised return across 5 seeds with the shaded area corresponding to 1 s.d.}
    \label{fig:benchmark comparison}
\end{figure}

\section{Pre-training Analysis and Ablation}
\label{sec:analysis}

In this section, we analyse our decision to discretise the state space and explain why the Q-values defining our state policy provide a valuable signal that improves online sample efficiency. We also examine the effect of regularisation on state-reachability constraints and overestimation bias.

\paragraph{Inverse dynamics model (analysis only)}
As we learn state policies from action-free datasets, we have no way to directly evaluate their performance in the environment. In this section, for the sole purpose of comparing the predictive capabilities of state policies pre-trained using different algorithms, we use an expert-level IDM to translate predicted next-state targets ($\Delta s$, $s'$, or $s'-s$) into actions so that they can be evaluated in the environment. By using an expert-level IDM, we minimise the bias introduced when mapping state policy predictions to actions, ensuring that observed differences in performance can be attributed to the state policies themselves. The expert IDM is trained independently of our method and all action-free baselines and is used \textbf{only} for the offline analysis in this section. It is not used anywhere in our main algorithm or online experiments.

\paragraph{Why discretisation?}

To understand why we discretise the state space, we investigate the performance of state policies trained using imitation learning. We use imitation learning to pre-train and compare the performance of state policies that predict the next state $s'$, the continuous state difference $s'-s$, and the discrete state difference that we denote $\Delta s$ across several datasets in the first four columns of Table \ref{table:Offline methods results} below. As a reference, we also provide the performance of a BC policy that is trained on the same dataset with actions. We denote the name of each imitation method as BC + the target it is trying to mimic.

Table \ref{table:Offline methods results} shows that the policy trained to mimic $\Delta s$ targets is capable of generating trajectories that yield returns that match the performance of an imitation policy trained with action labels. In contrast, directly mimicking the continuous target state or state difference from the dataset results in poor evaluation performance.

To further understand why performance is poor when directly imitating $s'$ or $s'-s$, we introduce an additional evaluation metric: the \textbf{discrete state difference error per step}, averaged over an entire trajectory. This metric, reported in Table \ref{table:Offline methods diff state error} (Appendix \ref{sec:error table}), complements the return-based evaluation in Table \ref{table:Offline methods results} by directly measuring the predictive accuracy of state policies. To compute this metric, we transform the true state difference $s'_{obs}-s_{obs}$, into a discrete state difference using the procedure outlined in Equation \ref{alg:discretising state diff} and compare it to the predictions, $\Delta s_{predict}$, of the imitation policies at each time step during an episode. We compute the discrete difference error, $|\Delta s_{obs}- \Delta s_{predict}|$, for each time step and average the error across an entire episode for each imitation learning algorithm. We expect that for algorithms that fail to replicate trajectories, this error will be much higher. For a random policy, the average discrete difference error is roughly equal to the number of dimensions in the state space, since the average error per state dimension is approximately 1. We include the discrete difference error for a random policy in Table \ref{table:Offline methods diff state error}  as a reference.

\begin{table}[th!]
    \small  
    \tabcolsep=0.11cm
    \caption{Normalised average returns on D4RL and Factorised action tasks. Scores are averaged across 5 seeds with 10 episodes per seed. Ensemble-based methods use $N = 5$. Where relevant, we report the mean $\pm$ standard error.}
    \label{table:Offline methods results}
    \centering
    \scalebox{0.95}{
    \begin{tabular}{l|c|ccc|cc|}
        \toprule
        & \textbf{Reference} &\multicolumn{3}{c}{\textbf{Why discretisation?}} & \multicolumn{2}{|c|}{\textbf{Why regularisation?}}\\
        \toprule
        Dataset  & BC $a$ & BC $s'$ & BC $s'-s$ & BC $ \Delta s$ & DecQN\_N & $\algo$\\
        \midrule
        \multicolumn{1}{l}{Hopper}  \\
         
        -medium-replay  & 26.6  & 5.8 & 4.9  & 29.2  $\pm$ 3.7 & 7.7 $\pm$ 1.3  & {\bf 65.7  $\pm$ 2.6}\\
        -medium         & 51.5  & 3.5 & 31.4 & 47.5  $\pm$ 1.2 & 1.2 $\pm$ 0.31 & {\bf 55.8  $\pm$ 1.3}\\
        -medium-expert  & 51.3  & 2.9 & 15.6 & 51.9  $\pm$ 2.1 & 1.1 $\pm$ 0.33 & {\bf 110   $\pm$ 1.3}\\
        -expert         & 110.6 & 2.2 & 9.9  & 106.9 $\pm$ 2.4 & 1.9 $\pm$ 0.45 & {\bf 111.6 $\pm$ 0.08}\\
        \midrule
        \multicolumn{1}{l}{Halfcheetah}  \\
        
        -medium-replay & 36.3 & -0.9  & -0.58 & 40.2 $\pm$ 0.43       & -0.72 $\pm$ 0.2  & {\bf 43   $\pm$ 0.54}\\
        -medium        & 41.9 & -0.2  & -0.41 & 42.8 $\pm$ 0.21       & -1.9 $\pm$  0.15 & {\bf 44.6 $\pm$ 0.11}\\ 
        -medium-expert & 60.1 & -0.25 & -0.25 & 54.7 $\pm$ 3.9        & -1.6 $\pm$  0.13 & {\bf 87.8 $\pm$ 2.7}\\
        -expert        & 92.6 & -1.7  & -1    & {\bf 95.4 $\pm$ 0.63} & -1.9 $\pm$  0.09 & {\bf 95.1 $\pm$ 0.26}\\ 
        \midrule
        \multicolumn{1}{l}{Walker2d}  \\
        
        -medium-replay & 23.5        & 6.4  & -0.32 & 37.5 $\pm$ 6.1  & -0.41 $\pm$ 0.26 & {\bf 84.8   $\pm$ 2.2}\\
        -medium        & 71.8        & 9.1  & -0.76 & 60.1 $\pm$ 4    & -0.33 $\pm$ 0.01 & {\bf 77     $\pm$ 1.3}\\
        -medium-expert & 107.7       & 2.3  & -0.71 & 84.4 $\pm$ 3.4  & -0.24 $\pm$ 0.06 & {\bf 108.8  $\pm$ 0.13}\\
        -expert        & {\bf 108.2} & 17.1 & -0.69 &  108 $\pm$ 0.12 & 0.86  $\pm$ 0.94 & {\bf 108.1  $\pm$ 0.13}\\

        \midrule
        \multicolumn{1}{l}{Cheetah-Run}  \\
        
        -random-medium-expert & {\bf 41.5} & -0.16 & -0.16 & 36   $\pm$ 1.6  & 1.2  $\pm$ 0.09 & 40.2 $\pm$ 0.8\\ 
        -medium               & 40.4       & -0.69 & -0.69 & 41.2 $\pm$ 0.4  & 0.95 $\pm$ 0.13 & {\bf 41.8 $\pm$ 0.49}\\
        -medium-expert        & 61.6       & 1.7   & 1.7   & 48   $\pm$ 3.4  & 1.2  $\pm$ 0.32 & {\bf 90 $\pm$ 3.8}\\
        -expert               & {\bf 99.9} & -0.56 & -0.56 & 93.6 $\pm$ 3.7  & 2.2  $\pm$ 0.02 & {\bf 98.4 $\pm$ 1.9}\\

        \midrule
        \multicolumn{1}{l}{Quadruped-Walk}  \\
        -random-medium-expert & 28   & 6.6 & 6.6 & 39.6 $\pm$ 6.3 & 2.3  $\pm$ 0.1  & {\bf 45.4 $\pm$ 6.8}\\
        -medium               & 39.2 & 8.4 & 8.4 & 47.8 $\pm$ 7.2 & 2.9  $\pm$ 0.32 & {\bf 52.7 $\pm$ 7.8}\\
        -medium-expert        & 63.4 & 4.7 & 4.7 & 84.7 $\pm$ 8.1 & 0.19 $\pm$ 0.03 & {\bf 91.2 $\pm$ 5.8}\\
        -expert               & 97.7 & 6.6&  6.6 & 96.7 $\pm$ 6.4 & 0.1  $\pm$ 0.04 & {\bf 100.7 $\pm$ 1.2}\\
        
        \bottomrule
    \end{tabular}
    }
\end{table}

Comparing the two tables, we see a strong correlation between the performance in Table \ref{table:Offline methods results} and predictive accuracy in Table \ref{table:Offline methods diff state error} of state policies. Specifically, the imitation policies trained using $s'$ and $s'-s$ have errors comparable to the random policy on several benchmarks, whilst our transformation procedure allows us to keep errors down across all datasets. We also observe that the errors produced by our method are marginally larger for the more diverse datasets, medium-replay and medium-expert. This is a result of the imitation policy averaging across a wider range of discrete next states available in these datasets. 

\paragraph{Why regularisation?} 
We conduct an ablation study to understand the impact of regularisation. To do this, we compare our algorithm, $\algo$, with a state-only variant of DecQN that is identical in all respects except that it omits the regularisation term. To control for ensemble size, we configure the benchmark to use the same number of critics as our method and refer to this variant as DecQN\_N. We also compare against the imitation learning methods discussed in the previous section to demonstrate that our method is capable of using reinforcement learning to improve upon supervised learning methods.

In the last two columns of Table \ref{table:Offline methods results}, we clearly see that $\algo$ is capable of outperforming both action and discrete state imitation learning methods. We believe this is a clear sign that the loss of information from our state discretisation transformation has minimal impact on our method's ability to use reinforcement learning to improve on the dataset trajectories. We also see that DecQN\_N struggles to learn, indicating that regularisation is critical for offline performance and mitigating overestimation bias. In Table \ref{table:Offline methods diff state error}, we observe that the lack of regularisation in DecQN\_N also results in the trained state policy not being able to accurately predict next states, indicating the policy also suffers from state reachability constraints.

\paragraph{Summary of findings} From our analysis, we can conclude that both discretisation and regularisation are key components to ensuring that we can pre-train state policies effectively from action-free datasets. Table \ref{table:Offline methods diff state error} shows that \algo{} can accurately predict the next state, addressing state reachability constraints, whilst Table \ref{table:Offline methods results} shows that \algo{} is capable of suggesting next states that yield high long-term returns, addressing overestimation bias. This explains why state policies trained using our method provide beneficial signals for guided online learning that result in enhanced sample efficiency.

In addition to the above analysis, Appendix \ref{sec:ablation} presents an ablation study on the sensitivity of pre-training performance to the discretisation threshold $\epsilon$ and a comparison between 2-bin and 3-bin discretisation. These results demonstrate that our method is robust to a broad range of $\epsilon$ values and discretisation granularities.

\section{Discussion}

Leveraging datasets without action labels presents a fundamentally challenging problem where standard methods cannot be directly applied, yet such scenarios are common in domains like healthcare, finance, and robotics. Our work establishes a scalable and principled framework for action-free offline-to-online RL, showing that effective policies can be trained directly from $(s,r,s')$ tuples and then successfully transferred to online agents. By discretising state differences with a scale-invariant transformation and applying conservative regularisation, \algo{} avoids the instability of continuous regression and constrains predictions to reachable states. Coupled with a lightweight guided online learning mechanism, this framework consistently improves convergence speed and asymptotic performance across diverse benchmarks, scaling to complex environments with up to 78 dimensions. These results provide the first evidence that reliable guidance can be extracted from action-free datasets at scale. Furthermore, our analysis in Section \ref{sec:analysis} demonstrates that our learned state policy provides useful signals for online guidance. We believe this analysis will be valuable to the community for benchmarking and validating alternative action-free state policies.

Looking forward, we see several promising directions for further developing our approach. One potential avenue could be to adopt adaptive discretisation mechanisms such as \citet{seyde2024growing} to investigate whether more fine-tuned control can enhance pre-training performance and online guidance. Another is to evaluate more expressive translation mechanisms that go beyond the current minimal IDM, potentially improving robustness and performance in more complex domains.

Another promising avenue of research would be to adapt our method to learn from video comprised of suboptimal demonstrations similar to those available in vectorised state format in the D4RL and action-factorised datasets. While there exist methods (include citations) that leverage existing encoders to transform images into vector representations compatible with our approach, additional work is needed on how to reliably extract reward information from visual data to execute RL algorithms.

In a complimentary direction, it would also be interesting to combine our method with recent work that replaces value-function regression by classification over discretised value supports \citep{farebrother2024stop}. Incorporating such a cross-entropy based objective into our framework could further improve the stability and scalability of the critic, particularly when utilising larger architectures or solving problems in more complex domains.

\subsubsection*{Acknowledgements} This work was supported by the Engineering and Physical Sciences Research Council (2674942) and a UKRI Turing AI Acceleration Fellowship (EP/V024868/1). Jeremie Houssineau is supported by the Singapore Ministry of Digital Development and Information under the AI Visiting Professorship Programme, award number AIVP-2024-004.

\section{Reproducibility Statement}

We provide code as supplementary material to enable reproduction of all experiments presented in this paper. Detailed hyperparameter settings are provided in Appendix~\ref{sec:experimental details}. Additionally, we include all assumptions and complete proofs for the theorems stated in this work in Appendix~\ref{sec:proof}.

\bibliography{iclr2026_conference}
\bibliographystyle{iclr2026_conference}

\appendix

\newpage

\section{State policy prediction error}
\label{sec:error table}

\begin{table}[th!]
    \centering
    \caption{Average discrete difference error, $|\Delta s_{obs} - \Delta s_{predict}|$, across an entire episode. Results are averaged across 5 seeds with 10 episodes per seed. We report as reference the expected error of a random policy denoted "Rnd". Where relevant, we report the mean $\pm$ standard error.}
    \label{table:Offline methods diff state error}
    \small  
    \tabcolsep=0.11cm
    \scalebox{0.95}{
    \begin{tabular}{l|c|ccc|cc|}
        \toprule
        & \textbf{Reference}& \multicolumn{3}{c}{\textbf{Why discretisation?}} & \multicolumn{2}{|c}{\textbf{Why regularisation?}}\\
        \toprule
        Dataset      & Rnd & BC $s'$ & BC $s'-s$ & BC $ \Delta s$ & DecQN\_N & $\algo$\\
        \midrule
        \multicolumn{1}{l}{Hopper}    \\     
        -medium-replay  & 11 & 12.1  & 7.9  & 0.68 $\pm$ 0.04 & 11.8 $\pm$ 0.54 & 0.99 $\pm$ 0.04\\
        -medium         & 11 & 10.5  & 1.9 & 0.25 $\pm$ 0.01  & 6.8  $\pm$ 1.4  & 0.18 $\pm$ 0.008\\
        -medium-expert  & 11 & 10.8  & 6.9 & 0.18 $\pm$ 0.02  & 10.7 $\pm$ 0.49 & 0.16 $\pm$ 0.004\\
        -expert         & 11 & 11.5  & 5.7  & 0.15 $\pm$ 0.02 & 7.2  $\pm$ 1.3  & 0.12 $\pm$ 0.004\\
        \midrule
        \multicolumn{1}{l}{Halfcheetah}  \\
        -medium-replay & 17 & 17.1 & 2.9 & 1.1 $\pm$ 0.01   & 11.1 $\pm$ 0.2  & 1.42 $\pm$ 0.35\\
        -medium        & 17 & 15   & 3.5 & 0.39 $\pm$ 0.004 & 16.8 $\pm$ 0.19 & 0.76 $\pm$ 0.01\\ 
        -medium-expert & 17 & 17   & 2.3 & 0.46 $\pm$ 0.04  & 16.8 $\pm$ 0.01 & 0.43 $\pm$ 0.02\\
        -expert        & 17 & 16.9 & 4.2 & 0.38 $\pm$ 0.03  & 16.9 $\pm$ 0.36 & 0.36 $\pm$ 0.009\\ 
        \midrule
        \multicolumn{1}{l}{Walker2d} \\
        -medium-replay & 17 & 16.2  & 16.6 & 2.9 $\pm$ 0.14   & 17.3 $\pm$ 0.81 & 2.6 $\pm$ 0.04\\
        -medium        & 17 & 15    & 16.6 & 1.8 $\pm$ 0.3    & 13.3 $\pm$ 0.55 & 1.6 $\pm$ 0.02\\
        -medium-expert & 17 & 20.3  & 16.6 & 0.87 $\pm$ 0.08  & 9.7  $\pm$ 2.2  & 0.75 $\pm$ 0.02\\
        -expert        & 17 & 17.2  & 17.1 & 0.45 $\pm$ 0.009 & 14.4 $\pm$ 0.48 & 0.55 $\pm$ 0.02\\

        \midrule
        \multicolumn{1}{l}{Cheetah-Run}  \\
        -random-medium-expert & 17 & 16.8 & 6.5  & 2.6 $\pm$ 0.06 & 15.3 $\pm$ 0.76 & 1.89 $\pm$ 0.01\\ 
        -medium               & 17 & 16   & 5.1  & 1.4 $\pm$ 0.02 & 16.7 $\pm$ 0.13 & 1.07 $\pm$ 0.01\\
        -medium-expert        & 17 & 15.8 & 13.5 & 1.6 $\pm$ 0.02 & 12   $\pm$ 0.63 & 1.05 $\pm$ 0.09\\
        -expert               & 17 & 16.9 & 4.1  & 1.2 $\pm$ 0.09 & 17   $\pm$ 0.02 & 0.86 $\pm$ 0.06\\

        \midrule
        \multicolumn{1}{l}{Quadruped-Walk} \\ 
        -random-medium-expert & 78 & 74.4 & 19.8 & 18   $\pm$ 0.39 & 77.1 $\pm$ 0.03 & 15.8 $\pm$ 0.17\\
        -medium               & 78 & 77.6 & 19.8 & 13   $\pm$ 0.27 & 77.2 $\pm$ 0.21 & 12   $\pm$ 0.25\\
        -medium-expert        & 78 & 74.5 & 10.6 & 14.7 $\pm$ 0.34 & 78.2 $\pm$ 0.26 & 13.8 $\pm$ 0.17\\
        -expert               & 78 & 77   &  7.8 & 13.5 $\pm$ 0.49 & 77.2 $\pm$ 0.04 & 12.2 $\pm$ 0.06\\
        
        \bottomrule
    \end{tabular}
    }
\end{table}
Table \ref{table:Offline methods diff state error} compares the predictive capabilities of state policies trained using different algorithms.

\section{Proofs}
\label{sec:proof}
\paragraph{Assumptions and Definitions.} 
Throughout, we define and assume:
\begin{itemize}
    \item $\mathcal{M}=(S,\mathcal A,P,r,\gamma)$ is an MDP with rewards bounded within $[r_{\min},r_{\max}]$, and we define $\Delta_r := r_{\max}-r_{\min}$.
    \item The discount factor satisfies $\gamma \in [0,1)$.
    \item For each action $a\in\mathcal A$, discretisation is defined by a projection 
    $b:\mathcal A \to \mathcal A_D$ mapping a continuous action to its discrete bin. Without loss of generality, we assume this mapping is independent of the state.
    \item $\mathcal{M}_D=(S,\mathcal A_D,P,r,\gamma)$ denotes the MDP obtained from $\mathcal{M}$ by discretising its action space.
    \item $V^*$ and $V_D^*$ are the optimal value functions of $\mathcal{M}$ and its discretisation $\mathcal{M_D}$ respectively. 
    \item $\epsilon_{KL}$ is the symmetrised KL mismatch of next--state distributions and is defined as \[
\epsilon_{\mathrm{KL}}
:= \sup_{s\in S}\,\sup_{a\in\mathcal A}\,
\min\Bigl\{\KL\!\big(P(\cdot|s,a)\,\|\,P(\cdot|s,b(a)\big),\;
              \KL\!\big(P(\cdot|s,b(a))\,\|\,P(\cdot|s,a)\big))\Bigr\}.
\]
\end{itemize}

\begin{lemma}[Control of expectation differences via TV and KL]
\label{lem:tv-kl}
Let $\mu,\nu$ be probability measures on a measurable space and let $f$ be bounded and measurable.
Then
\[
\left| \int f \, d\mu - \int f \, d\nu \right|
\;\le\; \mathrm{sp}(f)\,\mathrm{TV}(\mu,\nu),
\qquad
\mathrm{sp}(f) := \sup_x f(x) - \inf_x f(x),
\]
see \cite{dudley2002real}. Moreover, by Pinsker’s inequality
\citep{cover2006elements},
\[
\mathrm{TV}(\mu,\nu) \;\le\; \sqrt{\tfrac{1}{2}\, \KL(\mu\|\nu)}.
\]
\end{lemma}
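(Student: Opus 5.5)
The plan is to prove the two parts of Lemma~\ref{lem:tv-kl} separately. For the span inequality I use a centering trick. For Pinsker's inequality I reduce to two-point distributions via the data-processing inequality and finish with a one-variable calculus argument.

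\textbf{Span bound.} Set $\sigma := \mu-\nu$, a signed measure with $\sigma(\Omega)=0$. Let $\Omega = A\cup A^c$ be its Hahn decomposition, so that $\sigma^+(\Omega)=\sigma(A)=\mathrm{TV}(\mu,\nu)$ and $\sigma^-(\Omega)=\sigma^+(\Omega)$ because $\sigma$ has total mass zero. Since $\sigma(\Omega)=0$, for any constant $c$,
\[
\int f\,d\mu-\int f\,d\nu=\int (f-c)\,d\sigma .
\]
Choose $c=\tfrac12(\sup f+\inf f)$, so that $\|f-c\|_\infty=\tfrac12\mathrm{sp}(f)$. Then
\[
\Bigl|\int (f-c)\,d\sigma\Bigr| \le \|f-c\|_\infty\,|\sigma|(\Omega)=\tfrac12\mathrm{sp}(f)\cdot 2\,\mathrm{TV}(\mu,\nu)=\mathrm{sp}(f)\,\mathrm{TV}(\mu,\nu).
\]
Equivalently, without centering, split $\int f\,d\sigma=\int_A f\,d\sigma^+-\int_{A^c} f\,d\sigma^-$ and bound it by $\sup f\cdot\mathrm{TV}-\inf f\cdot\mathrm{TV}$. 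Here $\mathrm{TV}(\mu,\nu)=\sup_B|\mu(B)-\nu(B)|$.

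\textbf{Pinsker's inequality.} If $\mu\not\ll\nu$ then $\KL(\mu\|\nu)=\infty$ and there is nothing to prove, so assume $\mu\ll\nu$. Take $A=\{d\mu/d\nu\ge 1\}$; this set attains the supremum, so $\mathrm{TV}=p-q$ with $p=\mu(A)$ and $q=\nu(A)$. The measurable map $x\mapsto \1_A(x)$ pushes $\mu,\nu$ forward to Bernoulli$(p)$ and Bernoulli$(q)$. By the data-processing inequality for KL (equivalently, the log-sum inequality applied on $A$ and $A^c$),
\[
\KL(\mu\|\nu)\ge p\log\tfrac pq+(1-p)\log\tfrac{1-p}{1-q}.
\]
It therefore suffices to show the binary inequality
\[
g(q):=p\log\tfrac pq+(1-p)\log\tfrac{1-p}{1-q}-2(p-q)^2\ge0 .
\]
Fix $p$. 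Then $g(p)=0$ and
\[
g'(q)=-\frac{p}{q}+\frac{1-p}{1-q}+4(p-q)=(q-p)\Bigl(\frac{1}{q(1-q)}-4\Bigr).
\]
Since $q(1-q)\le\tfrac14$, $g'$ has the sign of $q-p$. So $g$ is minimized at $q=p$, giving $g\ge0$. Rearranging, $\mathrm{TV}^2\le\tfrac12\KL$, which is the second claim.

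\textbf{Main obstacle.} The only delicate step is the reduction to the two-point case. I would verify the log-sum inequality directly: on $A$, Jensen's inequality for the convex function $t\log t$ under $\nu|_A/q$ gives $\int_A \log\frac{d\mu}{d\nu}\,d\mu\ge p\log\frac pq$, and the same argument applies on $A^c$. I would also handle the edge cases $q\in\{0,1\}$ by noting that either both sides are trivial or the KL is infinite. Both parts are standard, matching the cited references.
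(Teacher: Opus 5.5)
Your proof is correct and is the standard argument behind the references the paper cites in place of a proof: \citet{dudley2002real} for the span bound, and for Pinsker the Cover--Thomas reduction to the Bernoulli case via the set $\{d\mu/d\nu\ge 1\}$ and data processing, finished by the derivative argument with $q(1-q)\le\tfrac14$. Keep your explicit convention $\mathrm{TV}(\mu,\nu)=\sup_B|\mu(B)-\nu(B)|$ with natural logarithms, since the paper leaves it implicit and it is the convention under which both inequalities hold as written (with the full $L^1$ distance, Pinsker would instead read $\sqrt{2\,\KL(\mu\|\nu)}$).
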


\begin{lemma}[KL bound for discretised actions]\label{lem:KLbound}
If $\mathcal{M}$ is discretised by binning each
action $a\mapsto b(a)$, then
\[
\|V^* - V_D^*\|_\infty
\;\le\; \frac{\Delta_r}{1-\gamma}
\;+\; \frac{\gamma}{(1-\gamma)^2}\,\Delta_r \sqrt{\frac{\epsilon_{\mathrm{KL}}}{2}},
\] 
\end{lemma}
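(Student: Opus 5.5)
We compare the two optimal Bellman operators directly, without going through the greedy policy of one MDP. Write
\[
(TV)(s)=\sup_{a\in\mathcal A}\Bigl[r(s,a)+\gamma\!\int V\,dP(\cdot|s,a)\Bigr],
\]
and let $T_D$ be the same operator with the supremum restricted to $\mathcal A_D$. Both operators are $\gamma$-contractions in $\|\cdot\|_\infty$, with fixed points $V^*$ and $V_D^*$. We apply the triangle inequality, inserting $T_D V^*$:
\[
\|V^*-V_D^*\|_\infty=\|TV^*-T_DV_D^*\|_\infty\le \|TV^*-T_DV^*\|_\infty+\gamma\|V^*-V_D^*\|_\infty .
\]
Rearranging gives $\|V^*-V_D^*\|_\infty\le \frac{1}{1-\gamma}\|TV^*-T_DV^*\|_\infty$. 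It therefore suffices to bound the one-step gap $\|TV^*-T_DV^*\|_\infty$.

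Fix $s$ and any $a\in\mathcal A$, and compare the $a$-term of $TV^*$ with the $b(a)$-term available to $T_D$. The difference has two parts.
\begin{itemize}
\item The reward difference $|r(s,a)-r(s,b(a))|$ is at most $\Delta_r$, since rewards lie in $[r_{\min},r_{\max}]$.
\item The transition term is $\gamma\bigl|\int V^*\,dP(\cdot|s,a)-\int V^*\,dP(\cdot|s,b(a))\bigr|$. Lemma~\ref{lem:tv-kl} bounds it by $\gamma\,\mathrm{sp}(V^*)\,\mathrm{TV}$.
\end{itemize}

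For the span, $V^*$ takes values in $[r_{\min}/(1-\gamma),\,r_{\max}/(1-\gamma)]$, so $\mathrm{sp}(V^*)\le \Delta_r/(1-\gamma)$. For the total variation, TV is symmetric, so Pinsker's inequality may be applied in whichever KL direction is smaller. This gives $\mathrm{TV}\le\sqrt{\epsilon_{\mathrm{KL}}/2}$, matching the $\min$ inside the definition of $\epsilon_{\mathrm{KL}}$.

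Since $\mathcal A_D\subseteq\mathcal A$, we have $T_DV^*\le TV^*$ pointwise. In the other direction, taking a near-maximising $a$ for $TV^*(s)$ and comparing with $b(a)$ gives
\[
TV^*(s)-T_DV^*(s)\le \Delta_r+\frac{\gamma\,\Delta_r}{1-\gamma}\sqrt{\frac{\epsilon_{\mathrm{KL}}}{2}},
\]
uniformly in $s$. Dividing by $1-\gamma$ yields exactly the claimed bound:
\[
\|V^*-V_D^*\|_\infty\le \frac{\Delta_r}{1-\gamma}+\frac{\gamma}{(1-\gamma)^2}\,\Delta_r\sqrt{\frac{\epsilon_{\mathrm{KL}}}{2}} .
\]

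The main obstacle is keeping the inequalities valid when the suprema are not attained. We handle this with an $\eta$-optimal action and let $\eta\to0$. A second concern is justifying that $V^*$ has span at most $\Delta_r/(1-\gamma)$; this follows from bounded rewards, by iterating $T$ from the zero function. The crude reward bound $\Delta_r$ is loose but is all the statement requires. A sharper version would need reward continuity, which the assumptions do not give.
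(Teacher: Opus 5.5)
Your proof is correct and takes the same route as the paper: the contraction of $T_D$ reduces everything to the one-step gap $\|TV^*-T_DV^*\|_\infty$, which you bound by comparing a (near-)optimal $a$ with $b(a)$, using $\Delta_r$ for the reward difference and $\mathrm{sp}(V^*)\le\Delta_r/(1-\gamma)$ times a Pinsker bound on TV for the transition term. Your extra care fills in steps the paper passes over silently: $\eta$-optimal actions when the supremum is not attained, $\mathcal A_D\subseteq\mathcal A$ to justify the reverse inequality, and the symmetry of TV so that the $\min$ in $\epsilon_{\mathrm{KL}}$ is properly matched.
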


\begin{proof}
Let the Bellman operators be
\[
(TV)(s) := \max_{a\in\mathcal A} \Bigl\{ r(s,a) + \gamma\,\mathbb E_{s'\sim P(\cdot|s,a)}[V(s')]\Bigr\},
\]
and
\[
(T_DV)(s) := \max_{\bar a\in\mathcal A_D} \Bigl\{ r(s,\bar a) + \gamma\,\mathbb E_{s'\sim P(\cdot|s,\bar a)}[V(s')]\Bigr\}.
\]

By $\gamma$-contraction of $T_D$,
\[
(1-\gamma)\,\|V^* - V_D^*\|_\infty
\le \bigl\|\max_{a \in \mathcal{A}} Q^*(\cdot,a) - \max_{\tilde a \in \mathcal{A}_D} Q^*(\cdot,\tilde a)\bigr\|_\infty.
\]
Fix $s$ and let $a^*\in\arg\max_a Q^*(s,a)$; set $\bar a=b(a^*)$ the bin representative.
Then
\[
\max_{a \in \mathcal{A}} Q^*(s,a) - \max_{\tilde{a} \in \mathcal{A}_D} Q^*(s,\tilde a)
\le Q^*(s,a^*) - Q^*(s,\bar a).
\]
Expanding $Q^*$ gives
\[
Q^*(s,a^*) - Q^*(s,\bar a)
= r(s,a^*) - r(s,\bar a)
+ \gamma\Bigl(\mathbb E_{s'\sim P(\cdot|s,a^*)}[V^*(s')] - \mathbb E_{s'\sim P(\cdot|s,\bar a)}[V^*(s')]\Bigr).
\]
Define $\Delta_r$ such that $\Delta_r := r_{\max} - r_{\min}$. By the TV dual bound and Pinsker (Lemma~\ref{lem:tv-kl}):
\[
\Bigl|\mathbb E_{s'\sim P(\cdot|s,a^*)}[V^*(s')] - \mathbb E_{s'\sim P(\cdot|s,\bar a)}[V^*(s')]\Bigr|
\le \operatorname{sp}(V^*) \sqrt{\tfrac12\,\KL(P(\cdot|s,a^*)\|P(\cdot|s,\bar a))}.
\]
Taking suprema over $s$ and inserting the definition of $\epsilon_{\mathrm{KL}}$,
\[
(1-\gamma)\|V^* - V_D^*\|_\infty
\le \Delta_r + \gamma\,\operatorname{sp}(V^*)\,\sqrt{\frac{\epsilon_{\mathrm{KL}}}{2}}.
\]
Finally, apply $\operatorname{sp}(V^*) \le \Delta_r/(1-\gamma)$ to conclude
\[
\|V^* - V_D^*\|_\infty
\le \frac{\Delta_r}{1-\gamma}
+ \frac{\gamma}{(1-\gamma)^2}\,\Delta_r \sqrt{\frac{\epsilon_{\mathrm{KL}}}{2}},
\]
as claimed.
\end{proof}

\begin{lemma}[$\Delta s$--binning bound in continuous space]\label{lem:delta-s-binning}
If the continuous increment $\delta s$ is discretised into $\Delta s$ by a binning rule,
then
\[
\|V^* - V_D^*\|_\infty
\;\le\; \frac{\Delta_r}{1-\gamma}
\;+\; \frac{\gamma}{(1-\gamma)^2}\,\Delta_r \sqrt{\frac{\epsilon_{\mathrm{KL}}}{2}},
\]
\end{lemma}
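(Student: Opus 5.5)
The plan is to reduce the statement to Lemma~\ref{lem:KLbound}. We reinterpret the target increment $\delta s$ as the action of an auxiliary MDP and $\Delta s$ as its binned version, then check that the bound transfers verbatim.

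\textbf{Step 1: build the increment MDP.} Define an MDP $\widetilde{\mathcal M}=(S,\widetilde{\mathcal A},\widetilde P,\tilde r,\gamma)$ whose action set $\widetilde{\mathcal A}\subseteq\mathbb R^M$ is the set of continuous increments $\delta s$. The transition kernel $\widetilde P(\cdot\mid s,\delta s)$ is the next-state law induced by the original dynamics when the agent targets $\delta s$ (for instance, realised through an inverse dynamics map). The reward $\tilde r(s,\delta s)$ is the corresponding induced reward. Rewards still lie in $[r_{\min},r_{\max}]$, since they are expectations or selections of original rewards. Hence the quantity $\Delta_r$ is unchanged, and $\gamma<1$ is unchanged.

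\textbf{Step 2: define the discretised MDP.} The binning rule $\delta s\mapsto\Delta s$ is a projection $b:\widetilde{\mathcal A}\to\widetilde{\mathcal A}_D$. This is exactly the structure assumed in the Assumptions paragraph, with state-independence holding because $\Delta s$ in Equation~\ref{alg:discretising state diff} depends only on the increment after normalisation. We let $\epsilon_{\mathrm{KL}}$ be the symmetrised KL mismatch between $\widetilde P(\cdot\mid s,\delta s)$ and $\widetilde P(\cdot\mid s,b(\delta s))$.

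\textbf{Step 3: rerun the argument of Lemma~\ref{lem:KLbound}.} The steps are the same as before:
\begin{itemize}
\item use the $\gamma$-contraction of the discretised Bellman operator to bound $(1-\gamma)\|V^*-V_D^*\|_\infty$ by the sup gap between the continuous and discrete maxima of $Q^*$;
\item fix a maximiser $\delta s^*$ and compare it with $b(\delta s^*)$;
\item split $Q^*(s,\delta s^*)-Q^*(s,b(\delta s^*))$ into a reward difference, bounded by $\Delta_r$, and a $\gamma$-weighted expectation difference;
\item bound the expectation difference by $\mathrm{sp}(V^*)\sqrt{\epsilon_{\mathrm{KL}}/2}$ via Lemma~\ref{lem:tv-kl};
\item conclude using $\mathrm{sp}(V^*)\le\Delta_r/(1-\gamma)$.
\end{itemize}

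One point needs care. Lemma~\ref{lem:tv-kl} needs a KL in a specific direction, whereas $\epsilon_{\mathrm{KL}}$ takes the minimum of the two directions. Since TV is symmetric, Pinsker can be applied to whichever direction attains the minimum, so the bound holds with the min.

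\textbf{Main obstacle.} The hard part is Step 1: making precise that the increment-indexed process is a genuine MDP with measurable kernel $\widetilde P$ and bounded rewards. Two issues arise.
\begin{itemize}
\item Some increments may be unreachable from $s$. I would handle this by restricting $\widetilde{\mathcal A}(s)$ to reachable increments. Because the lemma is stated with a sup over $s$, state-dependent action sets cause no harm, provided every bin representative $b(\delta s)$ is itself admissible at $s$. If it is not, I would redefine $b$ to pick a reachable element of the bin.
\item The maximiser $\delta s^*$ may fail to be attained. In that case I would replace it with an $\eta$-optimal increment and let $\eta\to0$.
\end{itemize}
Once these points are settled, the remaining algebra is identical to Lemma~\ref{lem:KLbound}.
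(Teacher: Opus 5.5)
Your route differs from the paper's in what gets discretised. The paper never leaves the original action space. For each $s$ it declares $a\sim_s a'$ when the two actions induce the same law of $\Delta s$, picks a representative $\bar a_C(s)\in C$ in each class, and sends every $a\in C$ to it. Because the representatives are genuine actions carrying the original $r$ and $P$, three things follow. Lemma~\ref{lem:KLbound} applies to $\mathcal M$ itself; its argument is pointwise in $s$, so state-dependent representatives are harmless. The $V^*$ being controlled is literally the optimal value of $\mathcal M$. And reachability never arises.

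You instead make the increment the decision variable and rerun Lemma~\ref{lem:KLbound} on an auxiliary $\widetilde{\mathcal M}$. The rerun itself is sound. Your remark on the KL direction (TV is symmetric, so Pinsker may use whichever direction attains the minimum in $\epsilon_{\mathrm{KL}}$) and your $\eta$-optimal maximisers fix steps the paper passes over silently. In exchange, your discretised MDP is genuinely finite, with one representative per bin, which is how OSO-DecQN actually uses $\Delta s$. The paper's equivalence classes need not be finitely many: under noisy dynamics, distinct actions typically induce distinct laws of $\Delta s$.

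The price is a link you do not supply, and it matters more than the measurability and reachability issues you flag. Your rerun bounds $\|\widetilde V^*-\widetilde V_D^*\|_\infty$, where $\widetilde V^*$ is optimal for $\widetilde{\mathcal M}$. Since $\widetilde{\mathcal M}$ only plays the actions produced by your realisation map $g(s,\delta s)$, you have $\widetilde V^*\le V^*$. Equality essentially requires the realised actions to include (near-)maximisers of $Q^*(s,\cdot)$ at every $s$, for instance if $g(s,\cdot)$ is onto $\mathcal A$. Similarly, your $\epsilon_{\mathrm{KL}}$ ranges only over realised actions, not over all of $\mathcal A$ as in the Assumptions. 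Without such an assumption, you have bounded the gap for a different $V^*$.

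The cheapest repair uses something you already noted. Every reward involved, including those of your discretised MDP, lies in $[r_{\min},r_{\max}]$, and $\gamma<1$. So both $V^*$ and $V_D^*$ take values in $[r_{\min}/(1-\gamma),\,r_{\max}/(1-\gamma)]$, and $\|V^*-V_D^*\|_\infty\le\Delta_r/(1-\gamma)$ holds outright. The stated inequality is therefore implied by its first term alone. Neither your auxiliary MDP nor the paper's equivalence classes is actually load-bearing for this lemma.
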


\begin{proof}
Let $\delta := s'-s \in \mathbb R^M$ and $\phi_s : \mathbb R^M \to \mathcal B$ be a binning rule.
Define the discretised increment $\Delta s := \phi_s(\delta)$ and denote by
$P(\Delta s \mid s,a)$ the induced distribution.
For each state $s$, form bins $\mathcal C_s$ by the equivalence relation
$a \sim_s a'$ iff $P(\Delta s \mid s,a) = P(\Delta s \mid s,a')$,
and pick a representative $\bar a_C\in C$ for each $C\in \mathcal C_s$.

Construct the discretised MDP $\mathcal{M}_D$ by mapping each $a\in C$ to its representative $\bar a_C(s)$.
Applying Lemma~\ref{lem:KLbound} with this mapping and the above definition of $\epsilon_{\mathrm{KL}}$
yields the claimed bound.
\end{proof}

\begin{theorem}[Value bound under $k$ evenly spaced bins in $\Delta s$]
\label{thm:value-bound-kbins}
Assume the continuous increment satisfies
\[
\delta s \mid (s,a) \sim \mathcal N(\mu_\Delta(s,a), \Sigma_\Delta(s)), 
\qquad \Sigma_\Delta(s) \succ 0 \;\; \text{independent of $a$},
\]
and there exist constants $\delta_{\min} < \delta_{\max}$ such that, 
for every coordinate $i=1,\dots,M$ and all $(s,a)$,
\[
\delta_{\min} \;\le\; \mu_\Delta^i(s,a) \;\le\; \delta_{\max}.
\]
If the discretised increment space $\Delta s$ is partitioned into $k$ evenly spaced bins 
per coordinate, then
\[
\bigl\|{V^* - V_D^*}\bigr\|_\infty \;=\; O\!\left(\frac{H\sqrt{M}}{k}\right).
\]
\end{theorem}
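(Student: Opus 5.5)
The plan is to apply Lemma~\ref{lem:delta-s-binning}, which reduces the value gap to the quantity $\epsilon_{\mathrm{KL}}$, and then bound $\epsilon_{\mathrm{KL}}$ using the Gaussian structure of the increments. The first term $\Delta_r/(1-\gamma)$ in that lemma does not vanish as $k\to\infty$. To get a bound that depends on $k$, I will instead redo the comparison step inside the proof of Lemma~\ref{lem:KLbound}. I will treat the reward as a function of the induced increment distribution: rewards are averaged over the bin, or equivalently $r(s,a)$ depends on $a$ only through $\mu_\Delta(s,a)$ and is Lipschitz in it. Then the reward difference $r(s,a^*)-r(s,\bar a)$ is controlled by the same mean displacement as the transition term. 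This replaces the crude $\Delta_r$ by a term of the same order as the transition term.

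\textbf{Key step: KL for the binned Gaussian.} For each $(s,a)$ we have $P(\cdot\mid s,a)=\mathcal N(s+\mu_\Delta(s,a),\Sigma_\Delta(s))$, with the covariance independent of $a$. The closed form for Gaussians with common covariance gives
\[
\KL\bigl(P(\cdot|s,a)\,\|\,P(\cdot|s,\bar a)\bigr)=\tfrac12\,(\mu_\Delta(s,a)-\mu_\Delta(s,\bar a))^\top\Sigma_\Delta(s)^{-1}(\mu_\Delta(s,a)-\mu_\Delta(s,\bar a)).
\]
The binning partitions each coordinate's mean range $[\delta_{\min},\delta_{\max}]$ into $k$ intervals of width $H/k$. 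Mapping $a$ to a representative $\bar a$ whose mean lies in the same cell makes every coordinate of the mean difference at most $H/k$ in absolute value. Hence $\|\mu_\Delta(s,a)-\mu_\Delta(s,\bar a)\|_2\le H\sqrt M/k$. Assuming a uniform lower bound $\lambda_{\min}(\Sigma_\Delta(s))\ge\sigma^2>0$ (compactness or a uniform version of $\Sigma_\Delta\succ0$), this yields $\epsilon_{\mathrm{KL}}\le H^2M/(2\sigma^2k^2)$. Therefore $\sqrt{\epsilon_{\mathrm{KL}}/2}\le H\sqrt M/(2\sigma k)$.

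\textbf{Conclusion.} Substituting into the contraction argument of Lemma~\ref{lem:KLbound}, with the reward term handled as above, gives
\[
\|V^*-V_D^*\|_\infty\le \Bigl(\tfrac{L_r}{1-\gamma}+\tfrac{\gamma\Delta_r}{2\sigma(1-\gamma)^2}\Bigr)\tfrac{H\sqrt M}{k}.
\]
The constants in front are absorbed into $O(\cdot)$, which gives the claimed rate.

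\textbf{Main obstacle.} Two points need care. The first is the reward term, since Lemma~\ref{lem:delta-s-binning} as stated carries a non-vanishing $\Delta_r/(1-\gamma)$. I will try first to assume reward regularity in $\mu_\Delta$; failing that, I will define the discretised MDP's reward as the bin-representative's reward and accept an extra Lipschitz assumption. The second is ensuring that a representative with mean in the same cell exists. If no action realises such a mean, I will choose $\bar a$ within each equivalence class of Lemma~\ref{lem:delta-s-binning} and use the fact that the cell has diameter $H\sqrt M/k$.
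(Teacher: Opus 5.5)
Your proof is correct under the assumptions you add, and its core matches the paper's: the closed-form KL for Gaussians with a shared covariance, a uniform bound on $\Sigma_\Delta(s)^{-1}$ (the paper's $\Lambda_\Delta=\sup_s\lambda_{\max}(\Sigma_\Delta(s)^{-1})$, your $1/\sigma^2$), and a cell-size bound on the mean displacement.

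The genuine difference is how you treat the reward term. The paper plugs $\epsilon_{\mathrm{KL}}\le \Lambda_\Delta MH^2/(8k^2)$ straight into Lemma~\ref{lem:delta-s-binning}, obtaining
\[
\|V^*-V_D^*\|_\infty\le \frac{\Delta_r}{1-\gamma}+\frac{\gamma\,\Delta_r\sqrt{\Lambda_\Delta}}{4(1-\gamma)^2}\,\frac{H\sqrt M}{k},
\]
and says this ``grows at the claimed order''. That route needs only bounded rewards. But $\Delta_r/(1-\gamma)$ does not vanish, so it really proves $O(1)+O(H\sqrt M/k)$, not an error that shrinks with $k$. Your Lipschitz-in-$\mu_\Delta$ assumption is what makes the rate genuine.

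Some assumption of this kind is unavoidable when the representative is fixed by its mean. If $r(s,a)$ could depend on $a$ beyond $\mu_\Delta(s,a)$, such a representative could miss the reward, and the gap would stay bounded away from zero for every $k$.

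There is a tidier way to realise your first option, and it is natural for $(s,r,s')$ data: take $r(s,a)=\int r(s,s')\,P(ds'\mid s,a)$ with $r$ bounded. Reward and continuation then form a single function of $s'$ with span at most $\Delta_r/(1-\gamma)$. Lemma~\ref{lem:tv-kl} then gives $\|V^*-V_D^*\|_\infty\le \frac{\Delta_r}{(1-\gamma)^2}\sqrt{\epsilon_{\mathrm{KL}}/2}$ with no Lipschitz constant.

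Your choice of representative is also more robust than the paper's. The paper needs an action whose mean is exactly the bin centre, which buys a factor $1/2$ in the radius but may not exist. You only need any fixed action whose mean lies in the cell. That cell is nonempty because it contains $\mu_\Delta(s,a)$, and this already gives your diameter bound $H\sqrt M/k$. So the fallback in your last paragraph is unnecessary. In any case, the equivalence classes of Lemma~\ref{lem:delta-s-binning}, defined by equal induced laws of $\Delta s$, are not the mean cells.
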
 
\begin{proof}

For each $s$, partition the mean–increment box
\[
 [\delta s_{\min},\delta s_{\max}]^{\,M}
\]
into $k$ equal bins per coordinate (evenly spaced). For each bin $C$, choose the
representative $\bar a_C$ so that $\mu_\Delta\!\big(s,\bar a_C\big)$ is the bin center.
Let
\[
H := \delta s_{\max}-\delta s_{\min},\qquad
r^{\Delta s}_{\mathrm{cell}} := \frac{H}{2k}\sqrt M,\qquad
\Lambda_\Delta := \sup_{s}\lambda_{\max}\!\big(\Sigma_\Delta(s)^{-1}\big).
\]
with $\lambda_{\max}(\cdot)$ the largest eigenvalue of its argument.

By construction, for any $a\in C$,
\[
\bigl\|\mu_\Delta(s,a)-\mu_\Delta\!\big(s,\bar a_C\big)\bigr\|_2
\;\le\; r^{\Delta s}_{\mathrm{cell}}.
\]

Since the covariances are equal and independent of $a$,
\[
\KL\!\big(\mathcal N(\mu,\Sigma)\,\|\,\mathcal N(\mu',\Sigma)\big)
=\tfrac12(\mu-\mu')^\top \Sigma^{-1}(\mu-\mu')
\;\le\; \tfrac12\,\lambda_{\max}(\Sigma^{-1})\,\|\mu-\mu'\|_2^2.
\]
Apply this with $\mu=\mu_\Delta(s,a)$, $\mu'=\mu_\Delta(s,\bar a_C)$, $\Sigma=\Sigma_\Delta(s)$,
and then take suprema over $a\in C$, $C$, and $s$ to obtain the symmetrised mismatch
\[
\epsilon_{\mathrm{KL}}
\;\le\; \tfrac12\,\Lambda_\Delta\,\bigl(r^{\Delta s}_{\mathrm{cell}}\bigr)^2
\;=\; \frac{\Lambda_\Delta}{8}\,\frac{MH^2}{k^2}.
\]

\textbf{Plug into Lemma~\ref{lem:delta-s-binning}.}
Theorem~1 gives
\[
\|V^*-V_D^*\|_\infty
\;\le\; \frac{\Delta_r}{1-\gamma}
\;+\; \frac{\gamma}{(1-\gamma)^2}\,\Delta_r \sqrt{\frac{\epsilon_{\mathrm{KL}}}{2}}
\;\le\; \frac{\Delta_r}{1-\gamma}
\;+\; \frac{\gamma}{4(1-\gamma)^2}\,\Delta_r\,\sqrt{\Lambda_\Delta}\,\frac{H\sqrt M}{k},
\]
which grows at the claimed order.
\end{proof}

\section{Experimental details}
\label{sec:experimental details}

\subsection{Environment details}
\label{subsec:environment details}

\paragraph{D4RL suite}

The D4RL \citep{fu2020d4rl} suite comprises datasets generated across a set of tasks that use the MuJoCo \citep{todorov2012mujoco} physics engine. For each task, rewards are provided according to the agent's ability to learn precise control and balance of a robotic model whilst in motion. For each problem, several datasets are generated using different quality policies. The different quality data allow us to investigate how robust our algorithm is to the data-generating process.

\paragraph{Action-factorised DeepMind suite}

\citet{beesoninvestigation} investigate offline learning in a continuous action environment with datasets generated using discrete actions. We use their suite as an opportunity to investigate how our method handles offline learning with smaller datasets and environments with larger state and action dimensions than those available in the D4RL suite.

\paragraph{Dataset sizes} Table \ref{tab:dataset sizes} summarises the number of offline state-only samples used for each dataset. For datasets that share the same size across environments, we simply indicate the dataset quality (medium, medium-expert, expert).

\begin{table}
    \centering
    \begin{tabular}{l|c}
        Dataset names & Dataset size \\
        \toprule   
        D4RL Hopper medium-replay & 402k samples \\
        D4RL HalfCheetah medium-replay & 202k samples\\
        D4RL Walker2D medium-replay & 302k samples \\
        D4RL medium & 1M samples \\
        D4RL medium-expert & 2M samples \\
        D4RL expert & 1M samples \\
        Action factorised random-medium-expert & 200k samples\\
        Action factorised medium & 1M samples\\
        Action factorised medium-expert & 2M samples\\
        Action factorised expert & 1M samples \\
        \bottomrule
    \end{tabular}
    \caption{Number of offline state-only samples in each dataset used to pre-train the offline state policy.}
    \label{tab:dataset sizes}
\end{table}

\subsection{\algo{} hyperparameters}

We provide a detailed list of hyperparameters used for our experiments in Tables \ref{tab:hyperparam list} and \ref{table:Dataset specific hyperparameters}.
\begin{table}[th!]
    \caption{List of general hyperparameters used.}
    \label{tab:hyperparam list}
    \small
    \centering
    \begin{tabular}{c|c|c}
        \toprule
        \multirow{10}{*}{General Hyperparameters} & Hyperparameter & Value \\
        \midrule
        &Optimiser &  Adam\\
        &Mini-batch size &   256\\
        &Target update rate & 1e-3 \\
        &Gamma & 0.99 \\
        &Hidden dim & 512 \\
        &Hidden layers & 3 \\
        &Activation function & ReLU \\
        &discrete epsilon ($\epsilon$) & 1e-4 \\
        \midrule
        \multirow{3}{*}{Offline DecQN\_N} &Ensemble size & 5 \\
        &Learning rate & 1e-4 \\
        &n step returns & 1 \\
        
        \midrule
        \multirow{5}{*}{Online DecQN\_N} &Ensemble size & 5 \\
        &Minimum exploration & 0.05 \\
        & decay rate & 0.999 \\
        &n step returns & 3 \\ 
        &Learning rate & 5e-4 \\
        
        \midrule
        \multirow{7}{*}{Online TD3} &Ensemble size  & 2 \\       
        &n step returns & 1 \\
        &Policy noise & 0.2 \\
        &Policy noise clipping &   (-0.5,0.5)\\
        &Policy update frequency & 2   \\
        &Critic learning rate & 5e-4 \\
        &Actor learning rate & 5e-4 \\

        \midrule
        \multirow{2}{*}{IDM} &Learning rate & 1e-3  \\
        &Mini-batch size & 512 \\

        \bottomrule
        
    \end{tabular}
\end{table}

\begin{table}[th!]
    \caption{Regularisation ($\alpha$) weights used during offline learning.}
    \label{table:Dataset specific hyperparameters}
    \centering
    \tiny
    \tabcolsep=0.11cm
    \begin{tabular}{lc}
        \toprule
        Dataset      &  $\alpha$  \\
        \midrule
        \multicolumn{1}{l}{Hopper}    \\     
        -medium-replay                  & 3  \\
        -medium                         & 8  \\
        -medium-expert                  & 10 \\
        -expert                         & 8  \\
        \midrule
        \multicolumn{1}{l}{Halfcheetah}  \\
        -medium-replay                  & 2 \\
        -medium                         & 3 \\ 
        -medium-expert                  & 8 \\
        -expert                         & 8 \\ 
        \midrule
        \multicolumn{1}{l}{Walker2d} \\
        -medium-replay                  & 10 \\
        -medium                         & 8 \\
        -medium-expert                  & 8 \\
        -expert                         & 8 \\

        \midrule
        \multicolumn{1}{l}{Cheetah-Run}  \\
        -random-medium-expert           & 5  \\
        -medium                         & 5  \\
        -medium-expert                  & 5 \\
        -expert                         & 5 \\

        \midrule
        \multicolumn{1}{l}{Quadruped-Walk} \\ 
        -random-medium-expert           & 5 \\
        -medium                         & 5 \\
        -medium-expert                  & 5 \\
        -expert                         & 5 \\
        
        \bottomrule
    \end{tabular}
\end{table}

\subsection{Implementation details}

We run all tests on a single GeForce RTX 3080. We implement our algorithm with Python 3.10 using the pytorch framework. We run all offline experiments for 3 million time steps and all online experiments for 1 million time steps. In Tables \ref{tab:computational cost offline} and \ref{tab:computational cost guided} we provide the computational cost of running our method on the Walker2d/Cheetah environment for offline pre-training and online learning. In Table \ref{tab:computational total cost guided}, we provide a comparison of total runtime cost for our method compared to baseline methods on all environments. We note slightly higher runtimes for our guided online method as a result of the overhead of concurrently train an IDM.  For the online hyperparameter $\beta$, we start with the initial value of $\beta_{\mathrm{max}}$ and decrease the value every 100k timesteps by $\beta_{\mathrm{decrement}}$ until $\beta_{\mathrm{min}}$ is reached. We fix  $\beta_{\mathrm{max}}=0.5$, $\beta_{\mathrm{decrement}}=0.1$ and $\beta_{\mathrm{min}}=0$ across all environments and dataset qualities. In Table \ref{table:Dataset specific hyperparameters} we list the regularisation ($\alpha$) values used. In line with how the datasets were curated, for Q learning, we use an MSE loss for D4RL suite and Huber loss for the action-factorised DeepMind suite. The general hyperparameters in Table \ref{tab:hyperparam list} apply to all network architectures unless specified otherwise. All networks use a standard feed-forward neural network architecture.

\begin{table}[ht!]
    \centering
    \small
    \tabcolsep=0.11cm
    \begin{NiceTabular}{ccc}
        \toprule
         \makecell{Runtime offline \\ (s/epoch\tabularnote[$\star$]{1 epoch = 10,000 gradient steps}) } &\makecell{Runtime offline \\ (s/epoch\tabularnote[$\star$]{1 epoch = 10,000 gradient steps}) }& \makecell{GPU Mem. (offline)\\ (MiB)} \\
         \midrule
        \makecell{\algo \\ (N=5)}& 44.9 & 710 \\
        \midrule
        BC $\Delta s$ & 28 & 542 \\
        \bottomrule
    \end{NiceTabular}
    \caption{Computational cost of pre-training on Walker2d/Cheetah environments.}
    \label{tab:computational cost offline}
\end{table}

\begin{table}[ht!]
    \centering
    \small
    \tabcolsep=0.11cm
    \begin{NiceTabular}{ccc}
        \toprule
         &\makecell{Runtime online \\ (s/epoch\tabularnote[$\star$]{1 epoch = 10,000 gradient steps}) }& \makecell{GPU Mem. online\\ (MiB)}  \\
         \midrule
        \makecell{Guided learning\\ (Section \ref{subsec:guided})}& 88.1 & 638 \\
        \midrule
        \makecell{DecQN\_N \\ (N=5)} & 52.7 & 578 \\
        \midrule
        \makecell{TD3 \\ (N=2)} & 51 & 540 \\
        \bottomrule
    \end{NiceTabular}
    \caption{Computational cost for (guided) online learning on Walker2d/Cheetah environments.}
    \label{tab:computational cost guided}
\end{table}

\begin{table}[ht!]
    \centering
    \small
    \tabcolsep=0.11cm
    \begin{NiceTabular}{cccccc}
        \toprule
          & Hopper  & HalfCheetah & Walker2D & Cheetah-Run & Quadruped-Walk \\
        \midrule
        \makecell{\algo{} \\ offline + online}& $2.9$ + $1.8$ &  $3.7$ + $1.9$ & $3.7$ + $1.7$&  $3.3$ + $ 1.9$  & $5$ + $ 2.3$\\
        \midrule
        \makecell{DecQN\_N \\ (N=5)} & - & - & - & $ 1$ & $ 1.4$\\
        \midrule
        \makecell{TD3 \\ (N=2)} &  $ 1.1$& $1.2$ & $1.1$ & - &-  \\
        \bottomrule
    \end{NiceTabular}
    \caption{Total cost for  our method on all environments compared to baselines (in hours). For \algo{} we separate time into (pretraining) + (online).}
    \label{tab:computational total cost guided}
\end{table}

\subsection{Baselines}

We used our own implementation for all algorithms used in the main paper besides Behavioural Cloning (BC) with actions. We report the results of BC with actions from \citet{beesoninvestigation} for the factorised action datasets.

\section{Ablation studies}
\label{sec:ablation}

\subsection{impact of changing \texorpdfstring{$\epsilon$}{epsilon}}

We provide analysis demonstrating the effect of varying the value of $\epsilon$ that specifies the range of values that are converted to $0$ in the discretisation process. We look at how performance varies when the threshold value is set to $\epsilon = (1\times10^{-3},1\times10^{-4},1\times10^{-5},1\times10^{-6})$

\begin{table}[th!]
    \centering
    \caption{Ablation studying the effects of varying $\epsilon$. Results averaged across 3 seeds $\pm$ 1 s.e.}
    \small
    \begin{tabular}{l|c|c|c|c}
        \toprule
        & $1\times10^{-3}$ & $1\times10^{-4}$ & $1\times10^{-5}$ & $1\times10^{-6}$ \\
        \midrule
        hopper medium expert      & 107.5 $\pm$ 7.3 & 110 $\pm$ 1.3 & 112 $\pm$ 0.09 & 111.7 $\pm$ 2.3 \\
        halfcheetah medium replay & 42.3 $\pm$ 0.73 & 43 $\pm$ 0.54 & 43.1 $\pm$ 0.1 & 41.2 $\pm$ 1.25 \\
        halfcheetah medium expert & 93.2 $\pm$ 0.77 & 87.8 $\pm$ 2.7 & 93.8 $\pm$ 2.3 & 87.6 $\pm$ 5.3 \\
        walker2d medium           & 80.9 $\pm$ 1.3  & 77 $\pm$ 1.3   & 84.6 $\pm$ 0.26 & 79.7 $\pm$ 2.8 \\
        walker2d expert           & 109 $\pm$ 0.24 & 108.1 $\pm$ 0.13 & 108.8 $\pm$ 0.12 & 108.7 $\pm$ 0.07 \\
        \toprule
    \end{tabular}
    
    \label{table:ablation varying epsilon}
\end{table}

\subsection{setting \texorpdfstring{$\epsilon$}{epsilon} to 0}
The parameter $\epsilon$ is introduced as a tolerance for declaring ``no change" in a state dimension after normalisation. This helps avoid spurious sign flips caused by small numerical noise. In this section we look at the impact of removing 0 from the discrete options, instead discretising state difference into two bins instead of three. Despite only using two bins, the number of possible $\Delta s$ available from any given state is $2^{\mathrm{dim(S)}}$. We show in Table \ref{table:Offline ablation study} that this is also a sufficient amount of granularity for our method to learn from an offline dataset. We do however observe some notable degradation in performance, namely Walker2d medium-replay and Hopper medium-replay indicating setting $\epsilon=0$ slightly hurts some environments because every tiny fluctuation is treated as a change. We also observe, on average, higher variation across seeds. We do note minor improvements on performance in some datasets, most notably Walker2d-medium which we believe could be as a result of a better balance between overfitting and information loss for these less diverse datasets.

\begin{table}[th!]
    \centering
    \caption{Comparing the performance of our algorithm when discretising $\Delta s$ using two and three bins}
    \small
    \begin{tabular}{lccc}
        \toprule
        Dataset  & BC $ \Delta s$ (3 bins) & $\algo$ (2 bins) & $\algo$ (3 bins)\\
        \midrule
        \multicolumn{1}{l}{Hopper}  \\
         
        -medium-replay  & 29.2  $\pm$ 3.7 & 61.5 $\pm$ 7.27  & {\bf 65.7  $\pm$ 2.6}\\
        -medium         & 47.5  $\pm$ 1.2 & 52.2 $\pm$ 1.31  & {\bf 55.8  $\pm$ 1.3}\\
        -medium-expert  & 51.9  $\pm$ 2.1 & {\bf 110.4 $\pm$ 1.21} & {\bf 110   $\pm$ 1.3}\\
        -expert         & 106.9 $\pm$ 2.4 & {\bf 111.3 $\pm$ 0.43} & {\bf 111.6 $\pm$ 0.08}\\
        \midrule
        \multicolumn{1}{l}{Halfcheetah}  \\
        
        -medium-replay & 40.2 $\pm$ 0.43 & {\bf 43.5 $\pm$ 0.32}  & {\bf 43   $\pm$ 0.54}\\
        -medium        & 42.8 $\pm$ 0.21 & 44        $\pm$  0.17  & {\bf 44.6 $\pm$ 0.11}\\ 
        -medium-expert & 54.7 $\pm$ 3.9  & 86.3      $\pm$  1.57  & {\bf 87.8 $\pm$ 2.7}\\
        -expert        & 95.4 $\pm$ 0.63 & {\bf 97.9 $\pm$  0.26} & 95.1      $\pm$ 0.26\\ 
        \midrule
        \multicolumn{1}{l}{Walker2d}  \\
        
        -medium-replay & 37.5 $\pm$ 6.1  & 73.8      $\pm$ 3.8   & {\bf 84.8   $\pm$ 2.2}\\
        -medium        & 60.1 $\pm$ 4    & {\bf 81.7 $\pm$ 0.76} &  77         $\pm$ 1.3\\
        -medium-expert & 84.4 $\pm$ 3.4  & 108.5     $\pm$ 0.15  & {\bf 108.8  $\pm$ 0.13}\\
        -expert        & 108 $\pm$ 0.12  & {\bf 108.3     $\pm$ 0.09}  & 108.1  $\pm$ 0.13\\

        \bottomrule
    \end{tabular}
    \label{table:Offline ablation study}
\end{table}

We stick to only investigating 2 or 3 bins, as these are the only obvious ways to discretise the state difference $s'-s$. Using 1 bin would result in no learning, and increasing the number of bins beyond 3 would require setting thresholds for each bin, which could likely have to vary for each state dimension.

Table \ref{table:Offline ablation discrete difference error} presents the average discrete difference error when using two and three bins. For reference, we include a column showing the expected error under a random policy for both bin configurations. We see that the error is lower when using fewer bins. Additionally, we report the discrete difference error of $\algo$ as a percentage of the corresponding random policy's error (in brackets), providing a normalised metric for comparison.

\begin{table}[th!]
    \centering
    \caption{Average discrete difference error, $|\Delta s_{obs} - \Delta s_{predict}|$, across an entire episode. Results are averaged across 5 seeds with 10 episodes per seed. We report as reference the expected error of a random policy denoted "Rnd". In brackets we provide a percentage ratio of our algorithm error compared to the random policy using the respective number of bins. Where relevant, we report the mean $\pm$ standard error.}
    \small
    \begin{tabular}{lcccc}
        \toprule
        Dataset  & Rnd (3 bins) & Rnd (2bins) & $\algo$ (2 bins) & $\algo$ (3 bins)\\
        \midrule
        \multicolumn{1}{l}{Hopper}  \\
         
        -medium-replay  & 11 & 5.5 & 0.51 $\pm$ 0.02  (9.27$\%$) & 0.99 $\pm$ 0.04  (9$\%$)\\
        -medium         & 11 & 5.5 & 0.1  $\pm$ 0.004 (1.82$\%$) & 0.18 $\pm$ 0.008 (1.64$\%$)\\
        -medium-expert  & 11 & 5.5 & 0.07 $\pm$ 0.004 (1.27$\%$) & 0.16 $\pm$ 0.004 (1.45$\%$)\\
        -expert         & 11 & 5.5 & 0.06 $\pm$ 0.001 (1.09$\%$) & 0.12 $\pm$ 0.004 (1.09$\%$)\\
        \midrule
        \multicolumn{1}{l}{Halfcheetah}  \\
        
        -medium-replay & 17 & 8.5 & 0.68 $\pm$ 0.004 (8   $\%$) & 1.42 $\pm$ 0.035 (8.35$\%$)\\
        -medium        & 17 & 8.5 & 0.35 $\pm$ 0.002 (4.12$\%$) & 0.76 $\pm$ 0.01  (4.47$\%$)\\ 
        -medium-expert & 17 & 8.5 & 0.24 $\pm$ 0.004 (2.82$\%$) & 0.43 $\pm$ 0.02  (2.53$\%$)\\
        -expert        & 17 & 8.5 & 0.22 $\pm$ 0.009 (2.59$\%$) & 0.36 $\pm$ 0.009 (2.12$\%$)\\ 
        \midrule
        \multicolumn{1}{l}{Walker2d}  \\
        
        -medium-replay & 17 & 8.5 & 1.48 $\pm$ 0.02 (17.4$\%$) & 2.6  $\pm$ 0.04 (15.5$\%$)\\
        -medium        & 17 & 8.5 & 0.81 $\pm$ 0.02 (9.52$\%$) & 1.6  $\pm$ 0.02 (9.59$\%$)\\
        -medium-expert & 17 & 8.5 & 0.41 $\pm$ 0.02 (4.82$\%$) & 0.75 $\pm$ 0.02 (4.41$\%$)\\
        -expert        & 17 & 8.5 & 0.27 $\pm$ 0.01 (3.18$\%$) & 0.55 $\pm$ 0.02 (3.24$\%$)\\

        \bottomrule
    \end{tabular}

    \label{table:Offline ablation discrete difference error}
\end{table}

\subsection{Empirical analysis to \texorpdfstring{$\beta$}{beta} sensitivity}
\label{subsec:beta ablation}

In this section we analyse the sensitivity of our method to the choice of $\beta$ by comparing a linearly annealed schedule to keeping $\beta$ fixed throughout training

\begin{figure}[th!]
    \centering
    \includegraphics[width=0.32\linewidth]{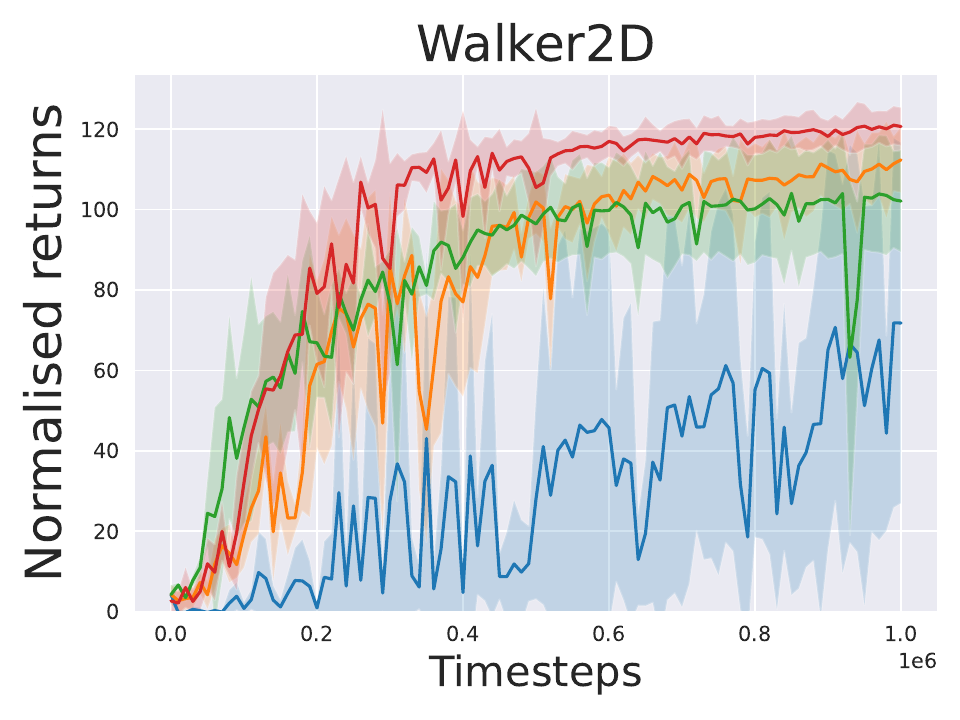}
    \includegraphics[width=0.32\linewidth]{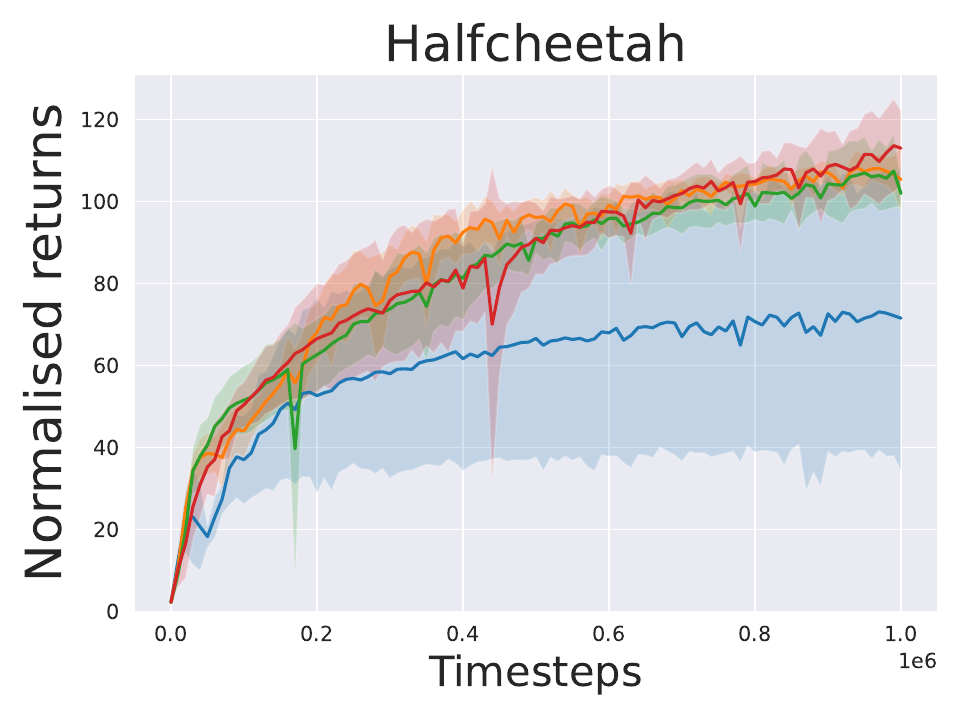}
    \includegraphics[width=0.32\linewidth]{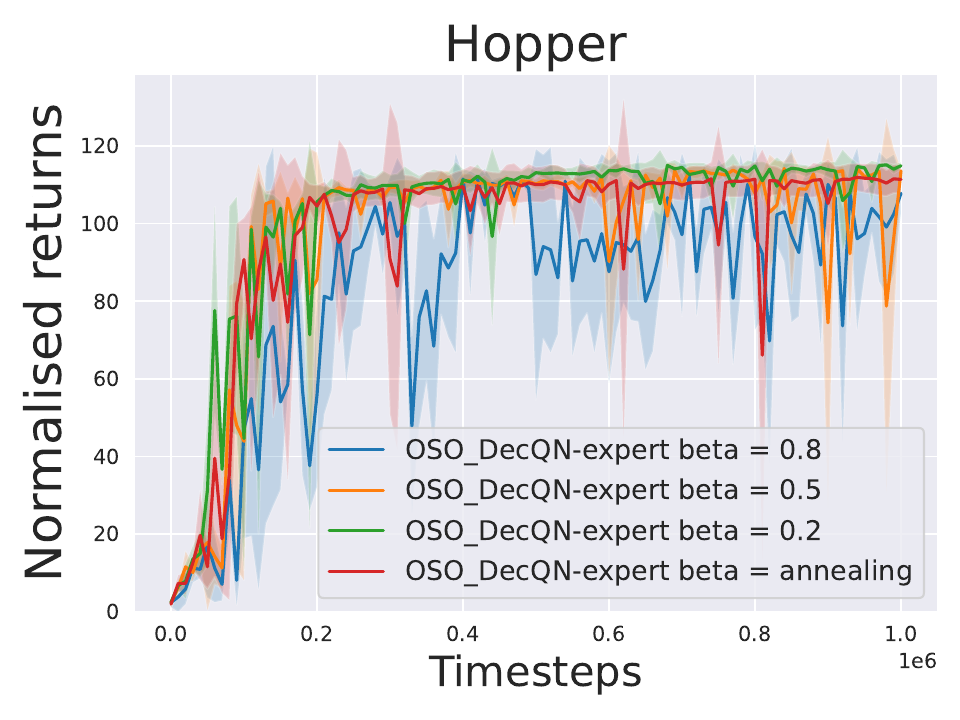}
    \caption{Comparison of fixed versus linearly annealed $\beta$ in guided online learning for Walker2D and Hopper.}
    \label{fig:beta ablation}
\end{figure}

In Figure \ref{fig:beta ablation}, we observe that in the simpler Hopper environment, where the online agent converges rapidly, the choice of $\beta$ has the smallest impact on performance. In contrast, in both HalfCheetah and Walker2D environments, performance degrades substantially when $\beta=0.8$ is kept fixed throughout training. In these cases, a large proportion of actions originate from the offline state policy while the IDM is still inaccurate, leading to poor early exploration that hinders the online agent.

This degradation is mitigated when smaller $\beta$ values are used. For example, in HalfCheetah, performance with $\beta=0.2$ and $\beta=0.5$ is similar during learning, but annealing $\beta$ results in slightly improved asymptotic performance. In Walker2D, annealing from $\beta=0.5$ yields the best results, combining strong early exploration from guided actions with the flexibility to rely more on the agent's own learned policy later in training.

Our interpretation is that $\beta$ controls the exploration-exploitation balance induced by the offline state policy. A larger $\beta$ increases the probability that the agent follows suggestions from the offline state policy rather than its current online policy, which encourages offline-guided exploration early in training but can lead to under-exploitation of the learned policy later on. However, when $\beta$ is too large, the offline guidance becomes dominated by noise in the IDM predictions, leading to the performance degradation observed in Walker2D and HalfCheetah.

In contrast, the annealed schedule starts from a relatively high $\beta$, leveraging the offline state policy to diversify exploration when the online agent is still weak, and then gradually reduces $\beta$ so that the agent increasingly exploits its own learned policy. This dynamic shift from offline-guided exploration to exploitation of the online policy improves both stability and final performance. While annealed $\beta$ initially learns slower than $\beta=0.2$ (due to the noisy IDM predictions), it quickly surpasses $\beta=0.2$ as the IDM becomes more accurate, achieving improved returns. This supports our view that annealing $\beta$ provides more favourable exploration-exploitation trade-off than any fixed choice.

\section{Online IDM analysis}
\label{sec:additional experiments}

\subsection{Architectural and batch size ablation}
\label{subsec:idm ablation}

In Table \ref{table:IDM ablation}, we see that varying the IDM depth and minibatch size produced only a moderate change in guided online learning performance. While three layers with a batch size of 512 achieved the highest returns in both environments, the gains over other configurations were relatively small, indicating that the IDM capacity is not constraining learning performance. This supports our design choice of using a lightweight IDM, as performance is driven primarily by the quality of the offline state policy rather than fine-tuning performance.

\begin{table*}[th!]
\centering
\caption{Effect of varying number of layers and minibatch size of the online IDM on final online performance for HalfCheetah medium and Walker2D medium. Results averages over 3 seeds $\pm$ 1 s.e.\\}
\label{table:IDM ablation}
\begin{tabular}{cc}
\begin{tabular}{c|ccc}
\multicolumn{4}{c}{HalfCheetah medium} \\
\toprule
Layers & Batch 384 & Batch 512 & Batch 768 \\
\midrule
2 &  117.9 $\pm$ 5.9  &  116.7 $\pm$ 5.39 & 113.6 $\pm$ 11.8\\
3 & 111.6 $\pm$ 5  &  116.4 $\pm$ 4.7  &  116.4 $\pm$ 8.2 \\
4 & 103.2 $\pm$ 2.1 & 104.4 $\pm$ 3.5& 114.4 $\pm$ 6.3\\ 
\end{tabular}
&
\begin{tabular}{c|ccc}
\multicolumn{4}{c}{Walker2D medium} \\
\toprule
Layers & Batch 384 & Batch 512 & Batch 768 \\
\midrule
2 &  115 $\pm$ 7.3  &  113.3 $\pm $ 3.2  &  112.7 $\pm$ 15 \\
3 &  110.3 $\pm$ 12.1  & 120 $\pm$ 5.1 &  114.7 $\pm$ 7.7 \\
4 & 113.1 $\pm$ 2.3& 116.8 $\pm$ 6.9 & 119.8 $\pm 3.4$ \\
\end{tabular}
\end{tabular}
\end{table*}

Consistent with the observation, across IDM architectures we find only a weak correlation between IDM loss and final online return (Pearson $r = 0.19$ for HalfCheetah-medium and $r = -0.09$ for Walker2D-medium, indicating the variation in IDM prediction error explains little of the variation in performance.

\subsection{Learning dynamics}
\label{subsec:idm loss}

In Figure \ref{fig:idm loss analysis} we look at the evolution of the online IDM loss, defined as: $$L_{\mathrm{IDM}} = \|{\bf a}_{\mathrm{IDM}, \mathrm{off}}-{ \bf a}_{\mathrm{on}, \mathrm{off}} \|_1 + \|{\bf a} -I(s,\Delta s)\|_1$$ over the course of online training. Across all environments, the IDM loss converges quickly and remains stable for the remainder of learning. To quantify the final accuracy of the online IDM, Table \ref{table:idm loss table} compares the final online IDM loss after training with the loss of the expert-level IDM used in Section \ref{sec:analysis}. We note that the online IDM loss is composed of two terms, making it roughly twice as large as the expert IDM loss. By considering approximately half of the total online IDM loss, the online IDM converges to a level comparable to the expert IDM across all environments. This demonstrates that, even without action-labelled pre-training, the online IDM can rapidly achieve expert-level mapping accuracy, ensuring it is not a limiting factor in guided online learning.

\begin{figure}[th!]
    \centering
    \includegraphics[width=0.32\linewidth]{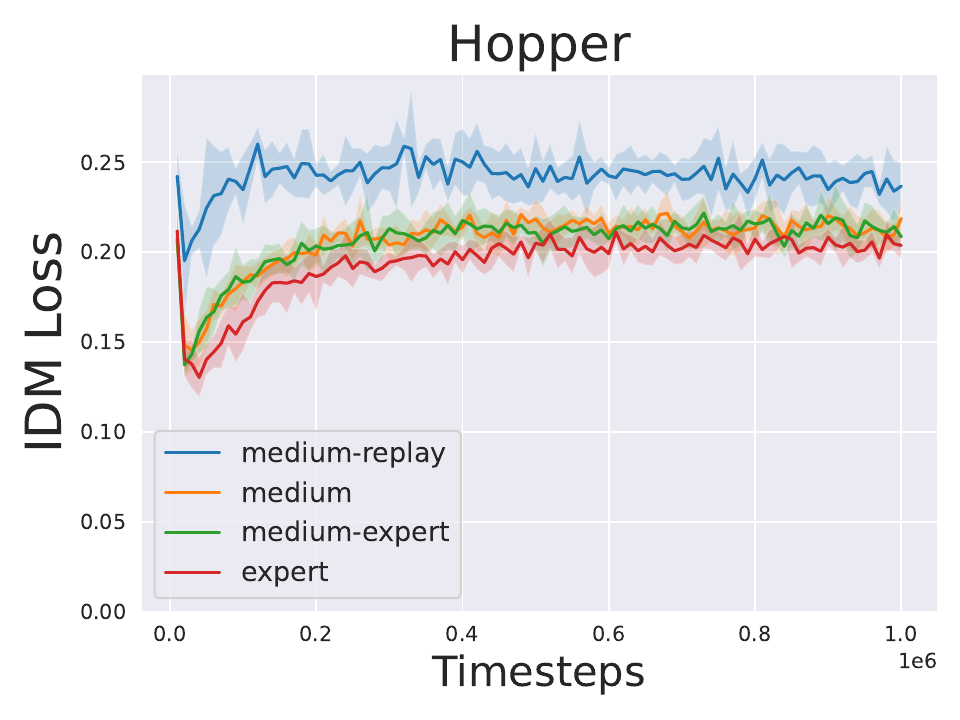}
    \includegraphics[width=0.32\linewidth]{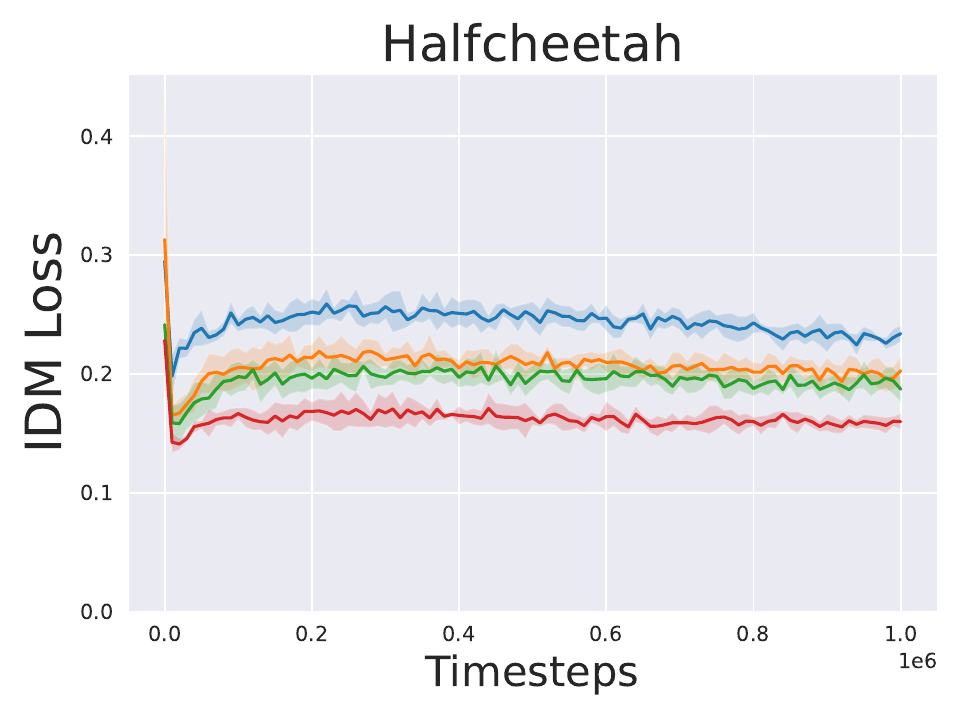}
    \includegraphics[width=0.32\linewidth]{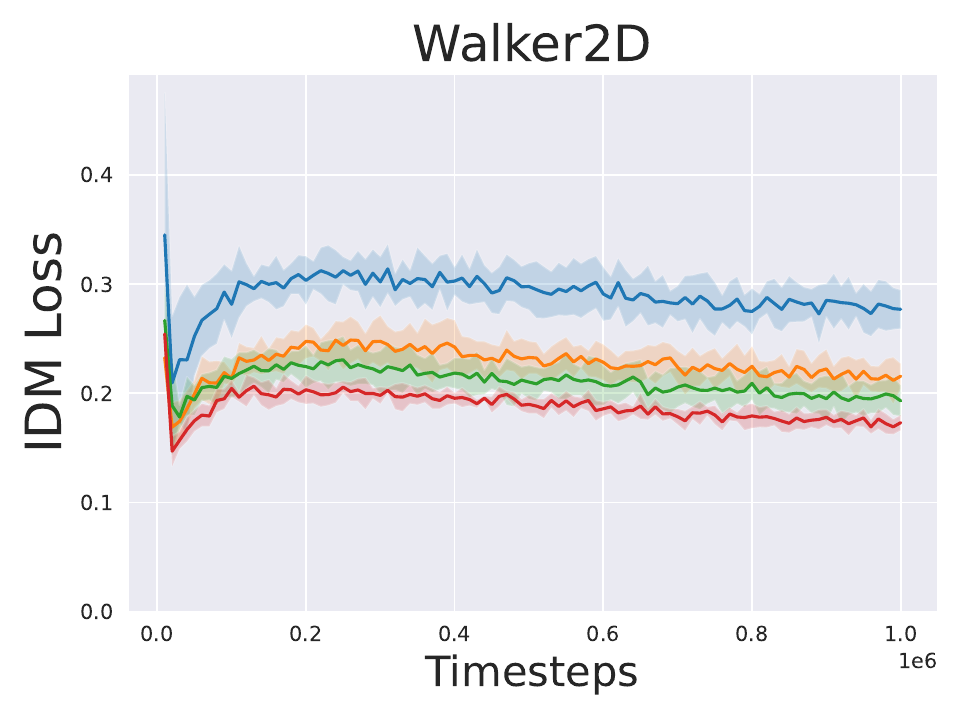}
    \caption{A look at the IDM loss function during online training}
    \label{fig:idm loss analysis}
\end{figure}

\begin{table}[th!]
    \centering
    \caption{Comparison of final online IDM loss and expert IDM loss. \textbf{The online IDM loss is the sum of two terms which makes it approximately twice as large as the expert IDM loss}. This shows that the online IDM loss converges to a similar performance as the expert IDM by the end of online learning. Results are averaged over 5 seeds $\pm$ 1 s.d.}
    \label{table:idm loss table}
    \small
    \begin{tabular}{lcc}
       \toprule
        Dataset  & Final expert IDM loss & Final online IDM loss\\
        \midrule
        Hopper & 0.13& 0.22 $\pm$ 0.022\\ 
        \midrule
        HalfCheetah & 0.12 & 0.196 $\pm$ 0.028\\
        \midrule
        Walker2D & 0.12 & 0.21 $\pm$ 0.04\\
        \midrule
    \end{tabular}
\end{table}

We note that the IDM is included solely as a lightweight translator from $(s,\Delta s$ to actions for the purpose of online guidance, and is not part of \algo{} itself. The loss curves here and $\beta$-sensitivity results (in Section \ref{fig:beta ablation}) reported here demonstrate that our improvements are robust to variations in IDM learning speed and scheduling, reinforcing that the core performance gains stem from the offline state policy rather than the IDM's specific architecture or optimisation.

\section{Guidance with SAC}
\label{sec:sac guidance}
In Figure \ref{fig:guidance sac} we demonstrate that our method enhances online learning when combined with SAC, compared to SAC without guidance, using the same hyperparameters as in Table \ref{table:Dataset specific hyperparameters}. Since the hyperparameters were tuned primarily for TD3, our goal was to test whether the method can still guide SAC without additional tuning, underscoring its robustness.

Unlike TD3, which rapidly learns near-optimal behaviour in the Hopper environment even without guidance, SAC learns more slowly, making the benefits of guidance in accelerating learning more pronounced. Consistent with the results for TD3, we also observe that guidance improves both convergence speed and asymptotic performance in HalfCheetah and Walker2D when applied with SAC.

\begin{figure}[th!]
    \centering
    \includegraphics[width=0.32\linewidth]{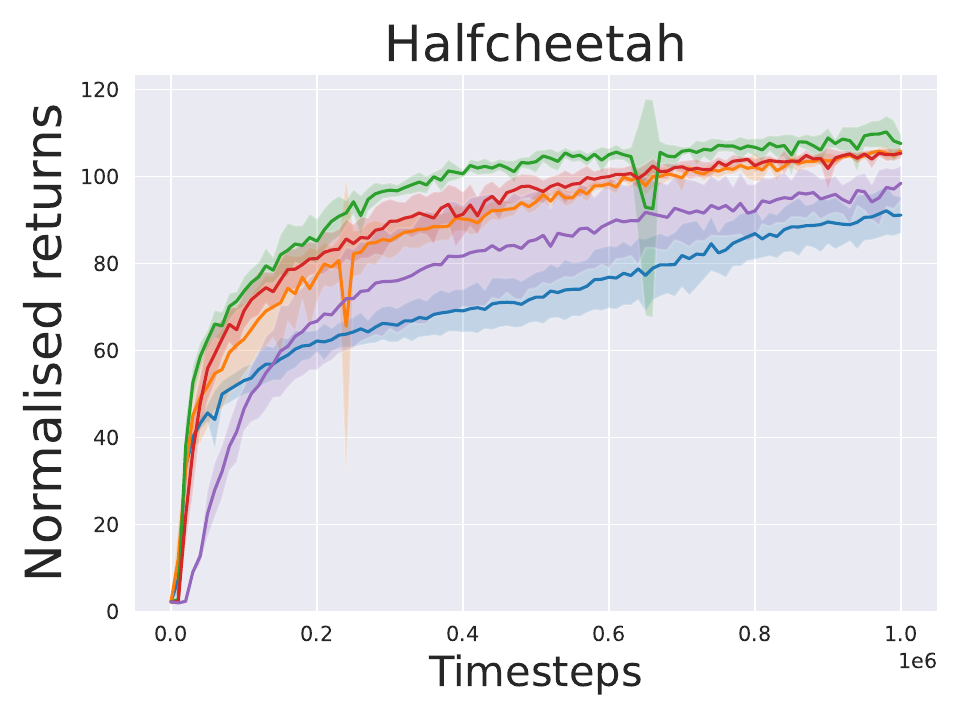}
    \includegraphics[width=0.32\linewidth]{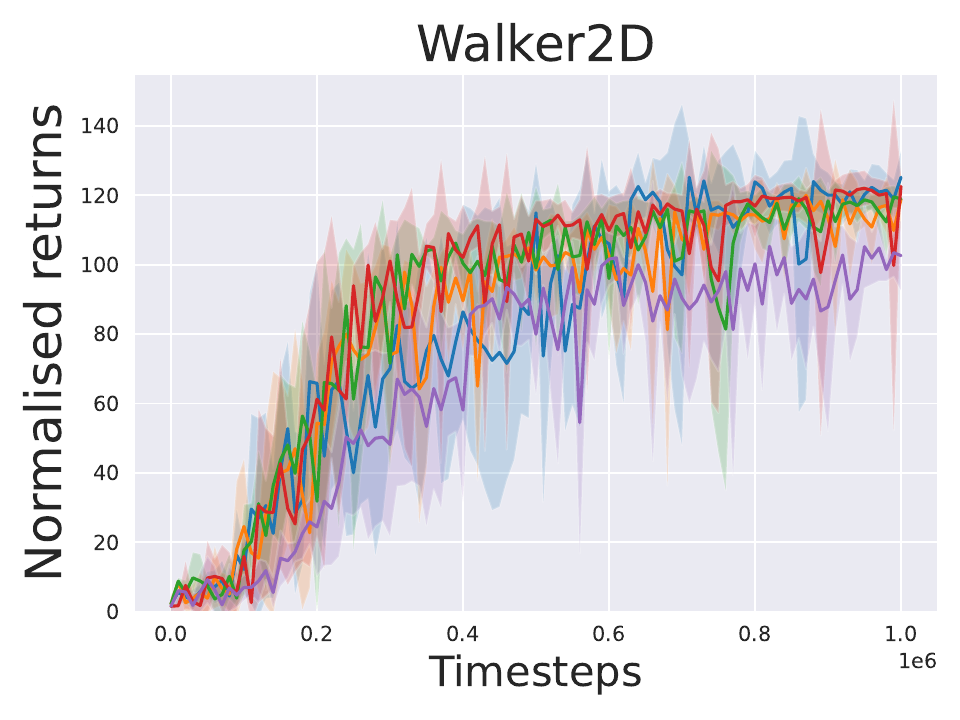}
    \includegraphics[width=0.32\linewidth]{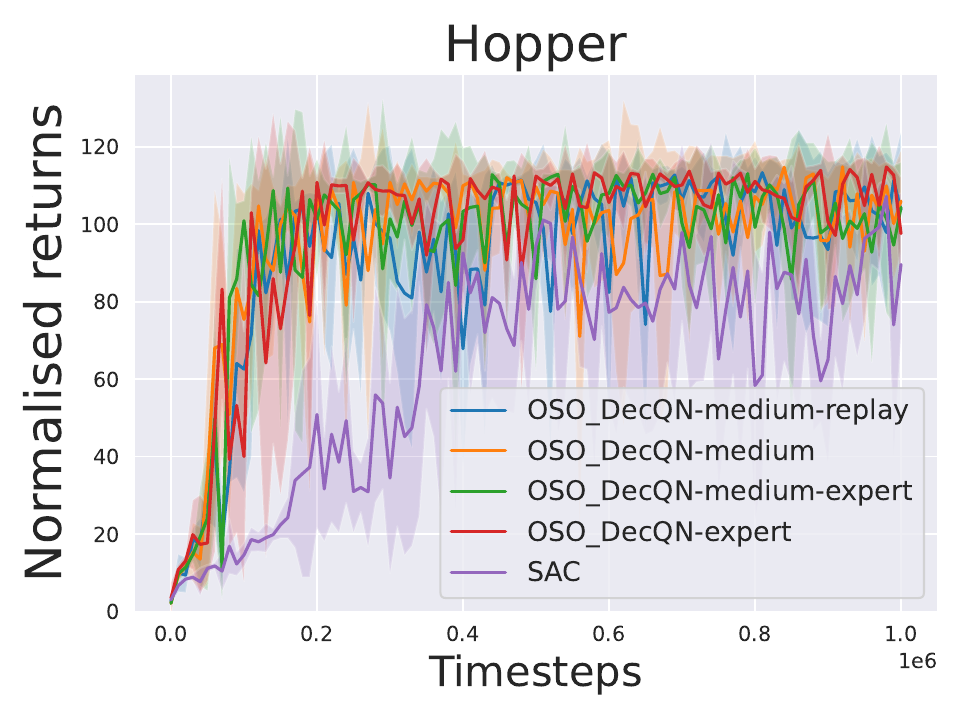}    
    \caption{Online learning curves on D4RL, each with corresponding legend. We compare our method of guided learning to the performance of an online agent trained using SAC, without guidance over 1M timesteps. The solid line corresponds to the mean normalised return across 5 seeds with the shaded area corresponding to 1 standard deviation away from the mean.}
    \label{fig:guidance sac}
\end{figure}

\section{Comparison to Offline-to-Online methods with action labels. (updated)}

In Figure \ref{fig:action_vs_action_free} we compare \algo{} to strong offline-to-online baselines that use action labels (O3F \citep{mark2022fine}, IQL\citep{kostrikov2021offline}, CAL-QL \citep{nakamoto2023cal}, AWAC \citep{luo2023action}) on Walker2D. As a result of training an action policy offline offline-to-online methods start online fine-tuning with a much higher initial performance than guidance with state-only datasets.
\begin{figure}[th!]
    \centering
    \includegraphics[width=0.32\linewidth]{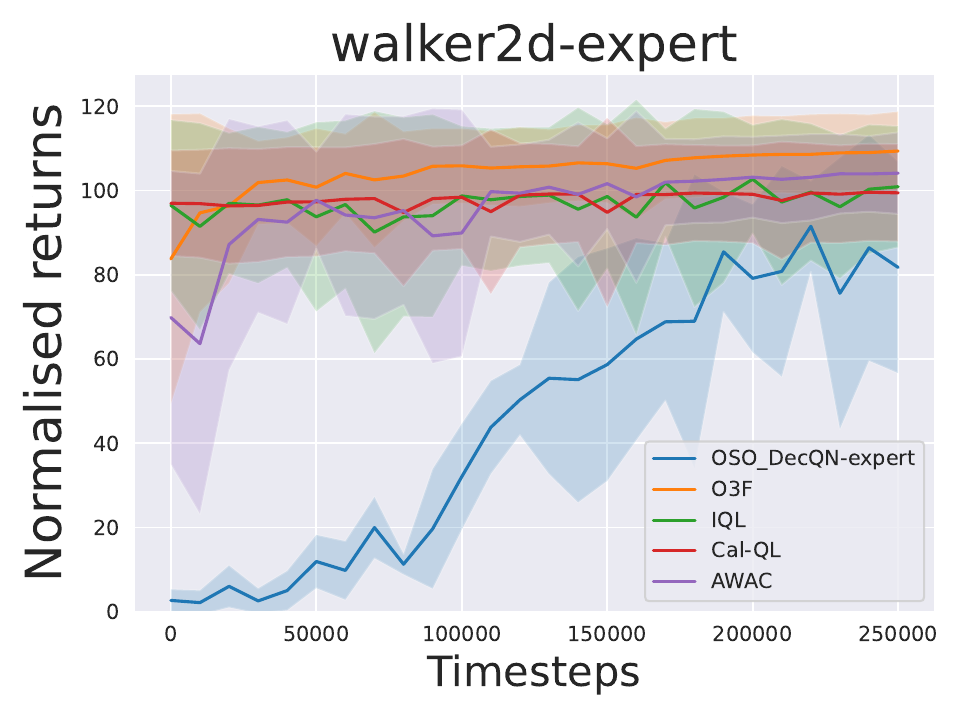}
        \includegraphics[width=0.32\linewidth]{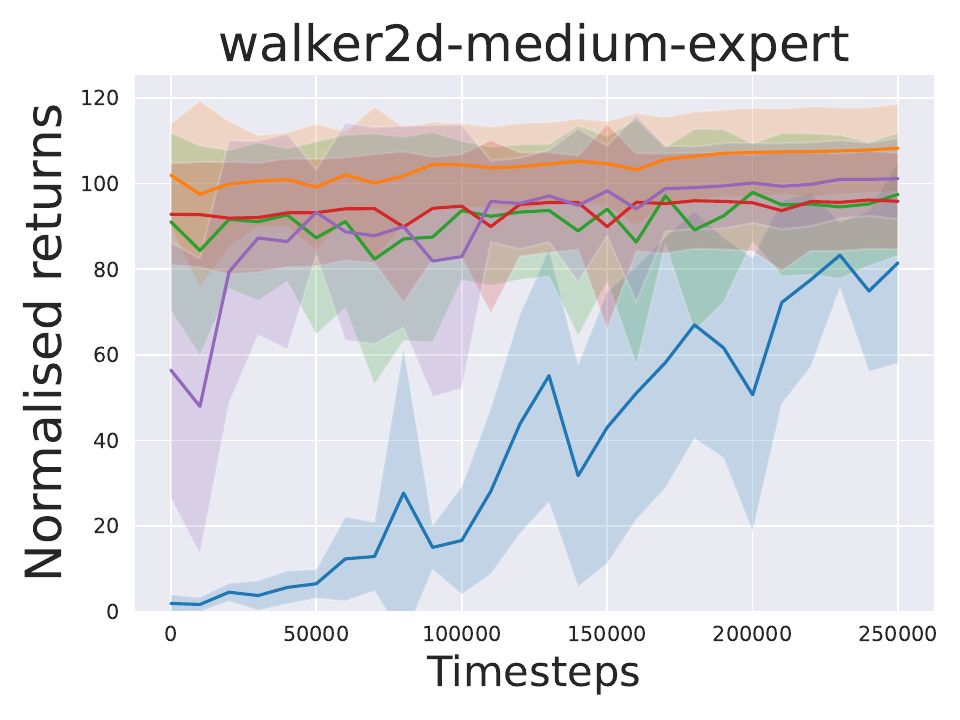}
    \includegraphics[width=0.32\linewidth]{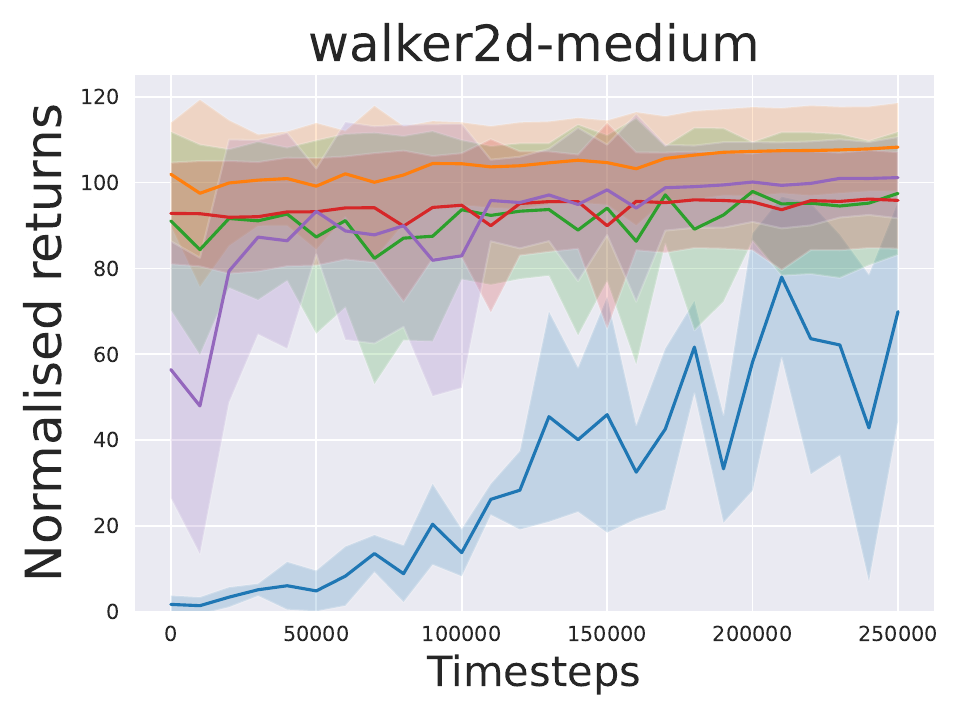}
    \includegraphics[width=0.32\linewidth]{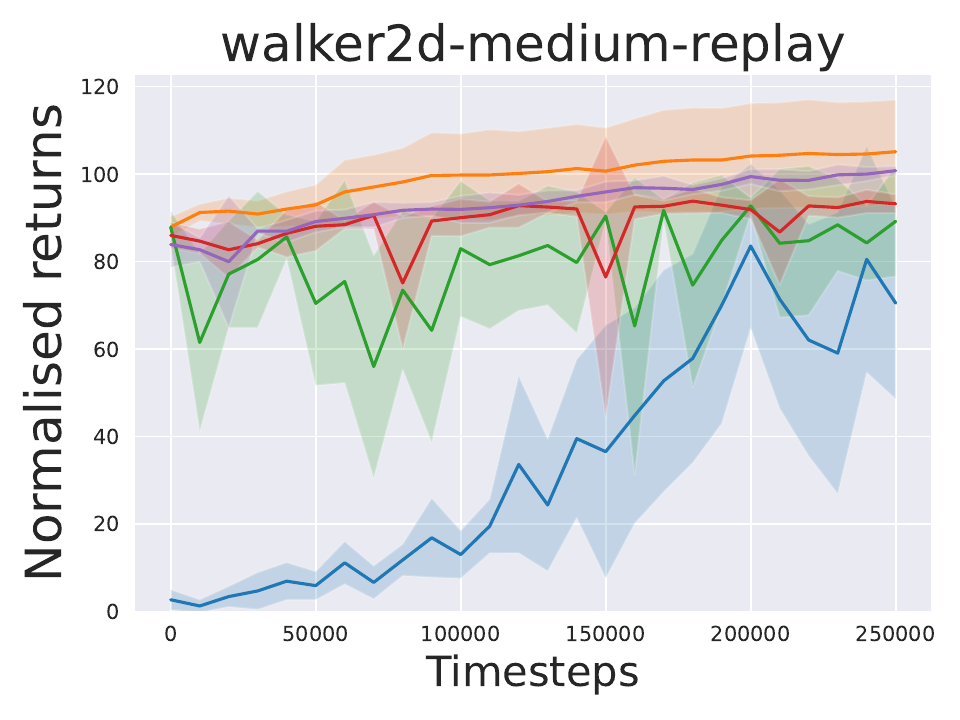}

    \caption{Normalised online returns on Walker2D for different dataset qualities (expert, medium-expert, medium, medium-replay). We compare our action-free method OSO-DecQN, which is pre-trained only on state-only data, to offline-to-online baselines that use full action labels (O3F \citep{mark2022fine}, IQL\citep{kostrikov2021offline}, CAL-QL \citep{nakamoto2023cal}, AWAC \citep{luo2023action}). Curves show the mean over 5 seeds; shaded regions indicate $\pm$ 1 s.e. over 250k training steps.}
    \label{fig:action_vs_action_free}
\end{figure}

\section{Combined comparison of AF-Guide and \algo.}

Figure \ref{fig:afguide_and_osodecqn} overlays the learning curves of AF-Guide, TD3 and \algo{} showing that our action-free method consistently outperforms AF-Guide.

\begin{figure}[th!]
    \centering
    \includegraphics[width=0.32\linewidth]{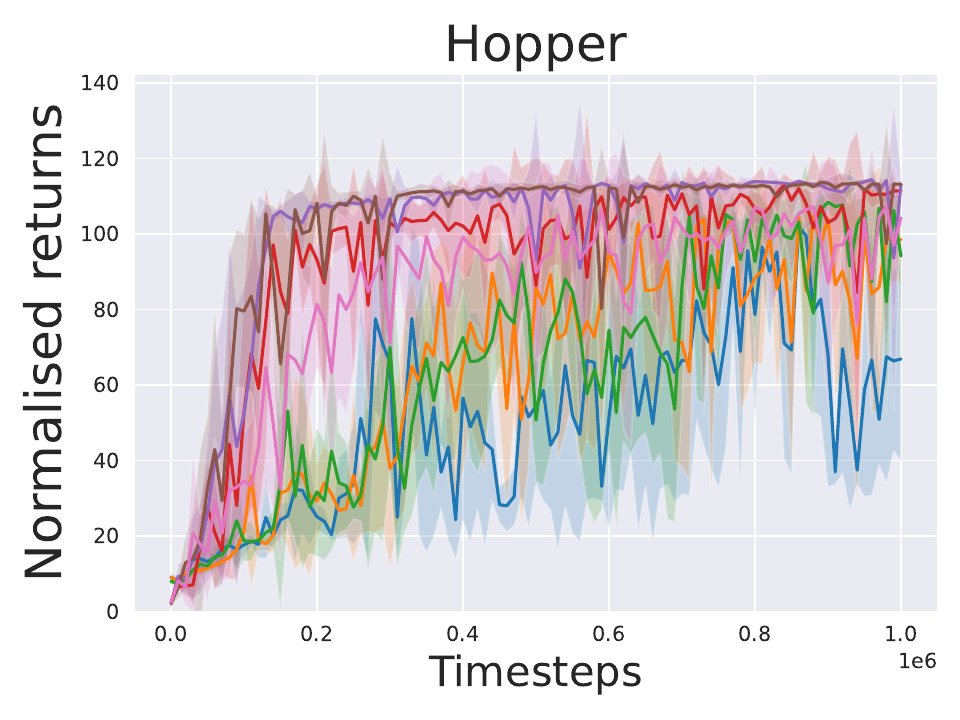}
    \includegraphics[width=0.32\linewidth]{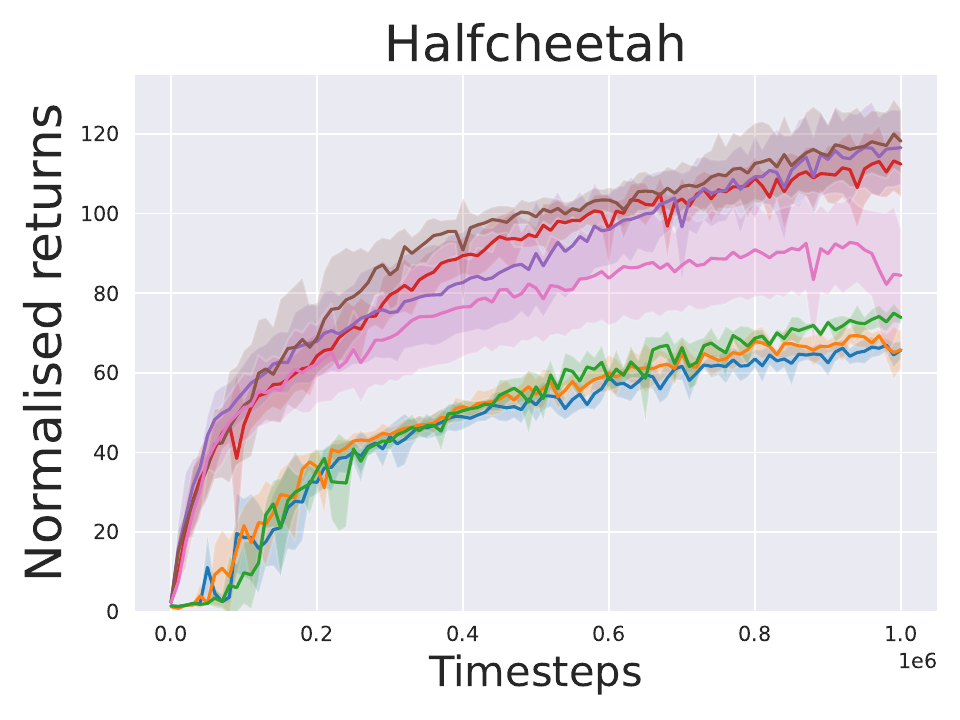}
    \includegraphics[width=0.32\linewidth]{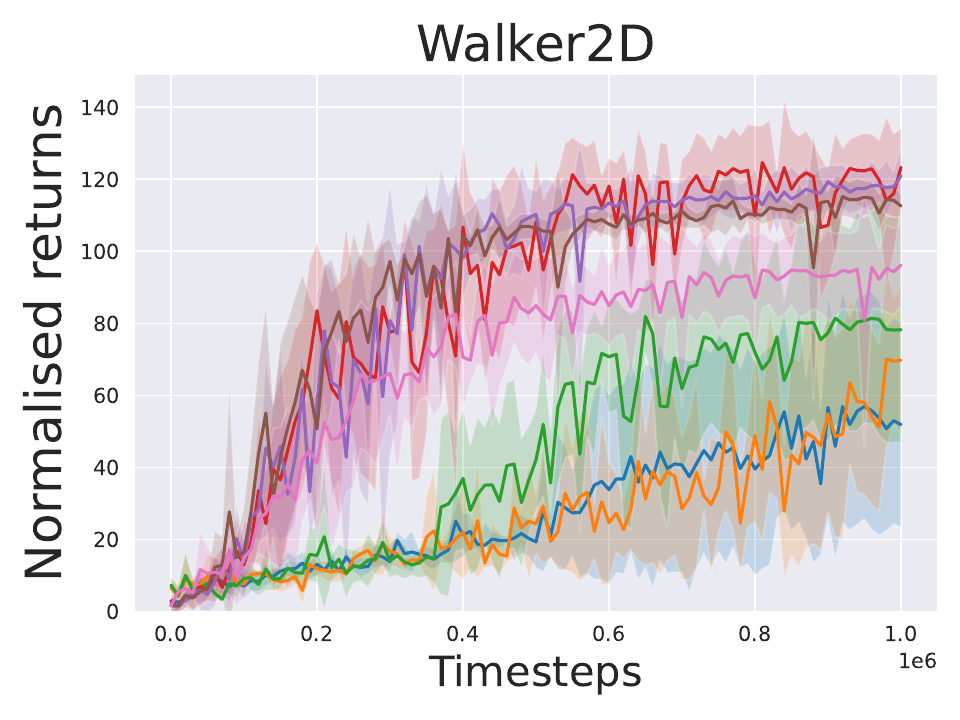}
    \includegraphics[width=0.9\linewidth]{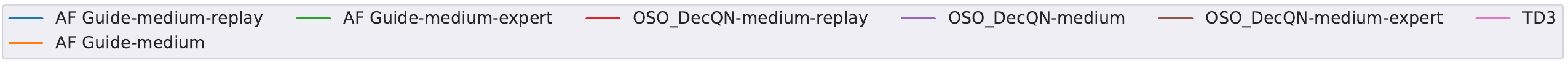}

    \caption{Normalised online returns (mean ± s.e. over 5 seeds) for Hopper, HalfCheetah and Walker2D across different dataset qualities. We plot AF-Guide, TD3 and OSO-DecQN on the same axes. OSO-DecQN consistently achieves comparable or higher final performance than AF-Guide.}
    \label{fig:afguide_and_osodecqn}
\end{figure}

\section{Ablation Effect of IDM bootstrapping term.}

Table \ref{table:IDM bootstrapping ablation} reports an ablation of the IDM bootstrapping term. We compare our full method, which includes the additional loss on offline $(s,s')$ pairs using actions from the current policy, with a variant trained without this term. Result are normalised returns (mean $\pm$ s.e over 5 seeds). The full method generally attains better or comparable final return, yielding a higher total average return (115.8 vs 112.7). This indicates the bootstrapping provides a useful extra signal that improves long-term performance. 

\begin{table}[th!]
    \centering
    \caption{Effect of IDM relabelling (bootstrapping) on final performance. Normalised returns (mean $\pm$ s.e. over 5 seeds) for \algo{} with the full IDM loss and for an ablated variant without the bootstrapping term.}
    \small
    \begin{tabular}{lcc}
        \toprule
        Dataset  & \algo{} (full) & \algo{} (w/o bootstrapping) \\
        \midrule
        \multicolumn{1}{l}{Hopper}  \\
         
        -medium-replay  & 111.4 $\pm$ 0.79 & 113 $\pm$ 0.48\\
        -medium         & 113.2 $\pm$ 1.5 & 88.1 $\pm$ 9\\
        -medium-expert  & 113.1 $\pm$ 0.46 & 109.3 $\pm$ 2.5\\
        -expert         & 111.4 $\pm$ 1.7 & 114.2 $\pm$ 0.64\\
        \midrule
        \multicolumn{1}{l}{Halfcheetah}  \\
        
        -medium-replay & 112.4 $\pm$ 4.1 & 108.6 $\pm$ 0.61\\
        -medium        & 116.4 $\pm$ 4.7 & 108.9 $\pm$ 1.6\\ 
        -medium-expert & 112.4 $\pm$ 4   & 119.3 $\pm$ 1.4\\
        -expert        & 113  $\pm$ 4.1  & 117.6 $\pm$ 2.2\\ 
        \midrule
        \multicolumn{1}{l}{Walker2d}  \\
        
        -medium-replay & 123.3 $\pm$ 5.4 & 117 $\pm$ 3.1\\
        -medium        & 120 $\pm$ 5.1 & 118.1 $\pm$ 2.4\\
        -medium-expert & 112.6 $\pm$ 1.5 & 113.2 $\pm$ 2.9\\
        -expert        & 120.7 $\pm$ 2.3 & 121 $\pm$ 1.4\\
        \midrule
        \multicolumn{1}{l}{Cheetah-Run}  \\
        -random-medium-expert & 121.5 $\pm$ 5 & 124.7 $\pm$ 7 \\ 
        -medium               & 120 $\pm$ 4.4 & 124.1 $\pm$ 3.5\\
        -medium-expert        & 132.7 $\pm$ 1.43 & 115.5 $\pm$ 10.2\\
        -expert               & 124.2 $\pm$ 4.9 & 109.9 $\pm$ 10\\

        \midrule
        \multicolumn{1}{l}{Quadruped-Walk} \\ 
        -random-medium-expert & 114.2 $\pm$ 5.3 & 111.8 $\pm$ 6.9 \\
        -medium               & 87.4 $\pm$ 13.2 & 95.6 $\pm$ 13.4\\
        -medium-expert        & 117.5 $\pm$ 2.7 & 106.9 $\pm$ 7.9\\
        -expert               & 117.7 $\pm$ 1.4 & 117.2 $\pm$ 2\\
        \midrule
        Total Average &  115.8 & 112.7 \\
        
        \bottomrule
    \end{tabular}
    \label{table:IDM bootstrapping ablation}
\end{table}

\section{LLM usage}
We use LLMs in our work solely for grammatical purposes.

\end{document}